\def\arxiv{arxiv}
\newtheorem{theorem}{Theorem}
\newtheorem{definition}{Definition}
\newtheorem{lemma}{Lemma}
\title{Learning from networked examples in a $k$-partite graph\footnote{A longer version of this paper appears at http://arxiv.org/abs/1306.0393 [arXiv:1306.0393].}}
\title{Learning from networked examples in a $k$-partite graph}
\author[1]{Yuyi Wang}
\author[1]{Jan Ramon}
\author[2]{Zheng-Chu Guo}
\affil[1]{KULeuven, Belgium}
\affil[2]{University of Exeter, UK}
\begin{document}
\maketitle

\begin{abstract}
Many machine learning algorithms are based on the assumption that training examples are drawn independently.
However, this assumption does not hold anymore when learning from a networked examples, i.e. examples sharing pieces of information (such as vertices or edges).
We propose an efficient weighting method for learning from networked examples
and show a sample error bound which is better than previous work.
\end{abstract}

\medskip

\noindent\textbf{Keywords}: Learning theory, Networked examples, Non-independent sample, Sample error, Generalization bound.

\section{Introduction}
In supervised learning, a labeled training sample for learning 
takes the form
$\mathbf{Z}=\{{\bf z}_i\}_{i=1}^m$
with ${\bf z}_i=(\mathbf{x}_i , y_i)\in(\mathcal{X}\times \mathcal{Y})$
where ${\cal X}$ is called the feature space or the input space,
and ${\cal Y}$ is called the label space or the output space.
A standard assumption is that the training examples ${\bf z}_i$ from $\mathbf{Z}$ are drawn independently and identically (i.i.d.) from
a probability distribution
$\rho$ on ${\cal Z}= {\cal X} \times {\cal Y}$.

In this paper, we consider a setting where examples share part of their features.
The i.i.d. assumption does not always hold in this setting.  
For instance, suppose that we are interested in predicting 
whether a given person likes a given movie.  
We could ask a set of persons to grade 
five of the movies they have seen in the past.   
Then, we want to predict for a new visitor (drawn from the same distribution as our training
persons)  whether he will like a newly introduced movie (having features
drawn from the same distribution as the movies in the past).  
Our training examples (containing a person ID, a movie ID and a grade)
are not all independent since each person graded 
several movies and all movies were graded by several persons.  
Still, we would like to get a generalization guarantee.

A naive method would be to ignore the problem and treat the training examples as independent examples. 
The result in \cite{janson04} showed that a larger number of possibly non-independent examples 
would not necessarily mean that a more accurate model can be constructed.

Another straightforward method is to
first find a subset of the training examples which are independent
and then learn from these.
Though we can directly use existing results to bound the sample error of this approach,
it is inherently difficult to find a sufficiently large independent set of training examples.  
Moreover, we will show that possibly not all available information is used and the solution is suboptimal.

In this paper, we propose a novel approach to learn from networked examples.
In our method, we first compute nonnegative weights for all training examples.
Using these weighted examples, we show that we can get better bounds of the sample error than the two methods above.
It is an advantage that the weights of the examples are efficiently computable.

The remainder of this paper is organized as follows.
We introduce networked training examples in Section \ref{sec:problem}.
We review some basic concepts of statistical learning theory in Section \ref{sec:pre}.
The related work is discussed in Section \ref{sec:related}, and this section mainly gives the sample error bounds of learning from networked data if we treat the data as i.i.d.. 
In Section \ref{sec:mis}, we consider the above-mentioned method that first select a set of independent training examples.
In Section \ref{sec:weighting}, we propose our network learning method. 
We derive weighted inequalities in Section \ref{subsec:inequalities},
and they are used in Section \ref{subsec:ermapproach} to estimate the sample error of the ERM algorithms with networked training examples.
Section \ref{sec:conclusion} concludes this paper with a summary of our contributions and a discussion of future work.

\ifx\arxiv\undefined 

\else 

In table \ref{tbl:notations}, we list the notations used in this paper.

\begin{table}[!hbp]\label{tbl:notations}
\centering
\begin{tabular}{|c |c|}
\hline
 Notation & Meaning  \\
\hline
 $L(\cdot,\cdot)$&  loss function \\ 
 \hline
${\cal H}$&  hypothesis space  \\
\hline
 $f_\rho^L$&  global minimizer of ${\cal{E}}^L(\cdot)$  \\
\hline
 $f_{\cal H}^L$&  minimizer of ${\cal{E}}^L(\cdot)$ in $\cal{H}$ \\
\hline
$f_{\bf Z}^L$&  global minimizer of ${\cal{E}}_{\bf Z}^L(\cdot)$  \\
\hline
$f_{{\bf Z}, {\cal H}}^L$&  minimizer of ${\cal{E}}_{\bf Z}^L(\cdot)$ in ${\cal H}$  \\
\hline
$f_{{\bf Z}}$&  minimizer of ${\cal{E}}_{\bf Z}(\cdot)$ in ${\cal H}$  \\
\hline
$f_{{\bf Z}_\mathsf{s}}$&  minimizer of ${\cal{E}}_{\mathsf{s}}(\cdot)$ in ${\cal H}$  \\
\hline
$f_{\cal H}$&  minimizer of ${\cal{E}}(\cdot)$ in ${\cal H}$  \\
\hline
 ${\cal{E}}^L(\cdot)$&  expected risk w.r.t $L$ \\
\hline
 ${\cal{E}(\cdot)}$&  expected risk w.r.t least square loss \\
\hline
 ${\cal{E}}^L_{\bf Z}(\cdot)$&  empirical risk w.r.t $L$   \\
\hline
 ${\cal{E}}_{\bf Z}(\cdot)$&  empirical risk w.r.t least square loss  \\
\hline
 ${\cal{E}}_{\mathsf{s}}(\cdot)$&  empirical risk w.r.t least square loss and ${\bf Z}_\mathsf{s}$ \\
\hline
${\cal{E}}_{\cal H}(\cdot)$ &  sample error \\
\hline
$G$ &  k-partite hypergraph \\
\hline
$V$& vertices set\\
\hline
$V^{(i)}$& partition $i$ of the vertices\\
\hline
$v^{(i,j)}$& $j$-th vertice in partition $i$\\
\hline
$e$ &  hyperedge \\
 \hline
$E$ & hyperedge set\\
\hline
$\Gamma$&  dependency graph  \\
 \hline
$\alpha(\Gamma)$&  independence number of $\Gamma$  \\
\hline
 $\chi^*(\Gamma)$&  fractional chromatic number of $\Gamma$  \\
\hline
${\cal{N}}(S,\tau)$ &  covering number of the metric space $S$ with radius $\tau$ \\
\hline
$\phi$ &  feature map \\
\hline
  ${\rho}$&  probability distribution  \\
\hline
  ${\mathsf{s}}$&  sum of the optimal weighting ${\bf w}$  \\
\hline
 ${\bf w}=(w_1,\cdots,w_n)$&  non-negative weight  \\
\hline
 ${\cal X}$&  input space  \\
\hline
 ${\cal Y}$&  output space  \\
\hline
 ${\cal Z}$&  ${\cal X}\times {\cal Y}$ \\
\hline
 ${\bf Z}$&  training sample  \\
 \hline
 ${\bf Z}_I$&  maximum independent set of ${\bf Z}$ \\
 \hline
 ${\bf Z}_{\mathsf{s}}$&  weighted training sample  \\
 \hline
\end{tabular}
\caption{Notations}
\end{table}

\fi

\section{Problem Statement}\label{sec:problem}
Before discussing our method, we first give a formal problem statement.

\subsection{The network}\label{subsec:networked}
In this paper, we use a $k$-partite hypergraph $G=(V, E, {\cal X}, {\cal Y},\phi)$ to represent the network which induces all the training examples.
The set of vertices $V$ is partitioned into $k$ disjoint sets $V^{(1)}, V^{(2)}, \ldots, V^{(k)}$,
and each hyperedge $e\in E \subseteq V^{(1)} \times V^{(2)} \times \cdots \times V^{(k)}$ intersects every set of the partition in exact one vertex.
The number of hyperedges is denoted by $m$, and the cardinality of a partition $V^{(i)}$ is denoted by $n_i$, i.e., $|E|=m \text{ and } |V^{(i)}| = n_i$.
The $j$-th vertex of the partition $V^{(i)}$ is denoted by $v^{(i,j)}$ where $1\le j\le n_i$.
We denote the $i$-th component of an edge $e$ as $e^{(i)}$, which is a vertex in $V^{(i)}$.
Two edges $e_a$ and $e_b$ \emph{overlap} if and only if there exists $1 \le i \le k$ such that $e_a^{(i)} = e_b^{(i)}$.

For instance, in our movie rating example we would have a vertex set $V^{(1)}$ of movies, a vertex set $V^{(2)}$ of persons (watching movies) and a set $V^{(3)}$ of movie ratings.  Hyperedges would be triple $(m,p,r)\in V^{(1)}\times V^{(2)}\times V^{(3)}$ of a movie, a person and the rating this person gave to that movie.

\subsection{Features}
Let ${\cal X} = {\cal X}^{(1)} \times \cdots \times {\cal X}^{(k)}$ be a $k$-dimensional compact metric space.
Let $\phi: \bigcup_{1\le i\le k} (V^{(i)} \mapsto {\cal X}^{(i)})$ be a function on the vertex set $V$ assigning to every vertex $v^{(i,j)}$ in $V^{(i)}$ 
a feature $\phi(v^{(i,j)}) = x^{(i,j)}$ drawn independently and identically from a \emph{fixed but unknown} distribution $\rho_{i}$. 
We will call $\phi(v^{(i,j)})$ a feature, 
even though it may be a compound object such as a vector.
We also use the notation $\phi$ as a function on hyperedges.
For any hyperedge $e$, we call $\phi(e) = [\phi(e^{(1)}), \phi(e^{(2)}), \ldots, \phi(e^{(k)})]$ the feature vector of $e$.

For instance, in our movie rating example, $\phi$ could assign to movies $m\in V^{(1)}$ pairs $(genre,length)$, to persons $p\in V^{(2)}$ a triple $(gender, age, nationality)$ and to a rating $r\in V^{(3)}$ a pair $(watching\_time, movie\_version)$.  $\phi$ would therefore assign to every hyperedge a triple containing in total $8$ values.

\subsection{Examples}
Every hyperedge $e_i$ in $E$ induces an example $\mathbf{z}_i = (\mathbf{x}_i, y_i) \in {\cal Z} = {\cal X}\times {\cal Y}$.
The feature vector of this example is $\mathbf{x}_i = \phi(e_i)$.
We will use $\mathbf{x}_i^{(j)}$ to denote the $j$-th component of the feature vector $\mathbf{x}_i$.
If $e_i^{(j)} = v^{(j,l)}$ where $1\le l \le n_j$, then the $j$-th component of the feature of the training example $\mathbf{z}_i$ is $x^{(j,l)}$.
Thus, $\mathbf{x}_i^{(j)} = x^{(j,l)} = \phi(v^{(j,l)})$.
We can see that if two hyperedges overlap, then the two corresponding examples are not independent (they share part of their features, and hence drawing the one example puts restrictions on the drawing of the other example).
Given the features ${\bf x}_i$ of this example, the label $y_i$ follows a fixed but unknown probability distribution $\rho_{y|\mathbf{x}}$.
We can then write $\rho(\mathbf{x},y)=\rho_{y|\mathbf{x}}(\mathbf{x},y)\rho_{\mathbf{x}}(\mathbf{x})$.
The training dataset derived from $G$ is denoted by $\mathbf{Z}=\{\mathbf{z}_i| e_i\in E\}$, and it is called a \emph{$G$-networked sample}.
The size of the sample $\mathbf{Z}$ is the same as the number of hyperedges, so $|\mathbf{Z}|=m$.

\subsection{Independence assumption}
We make the following assumptions:
\begin{itemize}
\item As in the traditional form of PAC-learning,
the feature of every vertex in the partitions $V_i$ is drawn identically and independently from $\rho_i$.
\item Especially, these features are independent from the edges in which they participate, i.e., $\rho_i(x^{(i,l)}) = \rho_i(x^{(i,l)}|E(G))$.
\item Moreover, all hyperedges (examples) get a target value drawn identically and independently from $\rho_{y|\mathbf{x}}$. Even if the hyperedges share vertices, still there target value is sampled i.i.d. from $\rho_{y|\mathbf{x}}$ based on their (possibly identical) feature vector.
\item One can choose freely which vertices participate in which hyperedges, and which edges belong to the training set and the test set, 
as long as this hyperedge and training set selection process is completely independent from the drawing of features for the vertices and the drawing of target values.
\end{itemize}

From the above assumptions, we can infer that 
$\rho_{\mathbf{x}}(\mathbf{x})= \prod_{i=1}^k \rho_i(\mathbf{x}^{(i)})$.
Our analysis of the sample error holds no matter what the distributions $\rho_i$ and $\rho_{y|\mathbf{x}}$ are, as long as the above assumptions hold.

It is possible that the empirical distribution of the training and/or test set deviate from $\rho$, 
but we will show that we can bound the extent to which this is possible based on the assumptions.

In our movie rating example, it may or may not be realistic that these assumptions hold.  In particular, if ratings are obtained from visitors of a cinema, then probably some visitors will already have a preference and will not choose movies randomly.  On the other hand, if ratings are obtained during an experiment or movie contest where a number of participants or jury members are asked to watch a specific list of movies, one could randomize the movies to increase fairness, and in this way our assumptions would be satisfied.

\section{Preliminaries}\label{sec:pre}
In this section, we review some basic concepts of statistical learning theory when the training sample $\bf Z$ is i.i.d.. 
These concepts will be used in following sections. 

\subsection{Learning task}
The main goal of supervised learning is to learn a function $f:\mathcal{X}\mapsto \mathcal{Y}$
from training examples ${\bf Z}$ to predict a label $y$ of an unseen point ${\bf x}$.
\newcommand{\loss}{L}
For convenience, we assume ${\cal Y} = \mathbb{R}$.
We define a loss function $\loss:\mathcal{Y}\times\mathcal{Y}\mapsto \mathbb{R}_+$ to measure the prediction errors.
The function value $\loss(f({\bf x}),y)$ denotes the local error suffered from the use of $f$ to produce $y$ from ${\bf x}.$
We average the local error over all pairs $({\bf x},y)$ by integrating over $\mathcal{Z}$ with respect to $\rho$.
A natural idea is to find the minimizer of the \emph{expected risk}
$$\mathcal{E}^{\loss}(f)=
 \int_{\mathcal{Z}} \loss(f({\bf x}),y) \rho({\bf x},y)\hbox{d}({\bf x},y).
$$
Then the target function we want to learn is defined as
\begin{equation*}
f^{\loss}_\rho=\arg\min \mathcal{E}^{\loss}(f),
\end{equation*}
where the minimization is taken over the set of all measurable functions.
Unfortunately, the probability distribution $\rho$ is unknown,
$f^\loss_\rho$ can not be computed directly.
If every example in ${\bf Z}$ is independent from each other, by the law of large numbers, as the sample size $m$ tends to infinity,
the \emph{empirical risk}
$$\mathcal{E}_{\bf Z}^{\loss}(f)=\frac{1}{m}\sum_{i=1}^m {\loss}(f({\bf x}_i),y_i)$$
converges to the expected risk $\mathcal{E}^{\loss}(f).$
Then we may get a good candidate $f_{\mathbf{Z}}^{\loss}$ to approximate the target function $f^{\loss}_\rho$, where
\begin{equation*}
f_{\mathbf{Z}}^{\loss}=\arg\min \mathcal{E}_{\mathbf{Z}}^{\loss}(f).
\end{equation*}

\subsection{Empirical risk minimization principle}\label{subsec:erm1}
In order to avoid over-fitting, we will not take the minimization of the empirical risk over all
the measurable functions. The main idea of the empirical risk minimization principle is to find
the minimizer in a properly selected hypothesis space $\mathcal{H}$, i.e.,
\begin{equation*}
f_{{\mathbf{Z}}, \mathcal{H}}^{\loss}=\arg\min_{f\in\mathcal{H}}
\mathcal{E}_{\mathbf{Z}}^{\loss}(f).
\end{equation*}
The hypothesis space $\mathcal{H}$ is usually chosen as a subset of $\mathcal{C}(\mathcal{X})$ which is the
Banach space of continuous functions on a compact metric space $\mathcal{X}$ with the norm
$\|f\|_{\infty}=\sup_{\mathbf{x}\in {\cal X}}|f(\mathbf{x})|.$

The performance of the ERM approach is evaluated in terms of the \emph{excess risk}
$$\mathcal{E}^\loss(f_{\mathbf{Z}, \mathcal{H}}^\loss)-\mathcal{E}^\loss(f_\rho^\loss).$$
If we define
\begin{equation*}
f^\loss_{\mathcal{H}}=\arg\min_{f\in\mathcal{H}} \mathcal{E}^\loss(f),
\end{equation*}
then the excess risk can be decomposed as
$$\mathcal{E}^\loss(f_{\mathbf{Z}, \mathcal{H}}^\loss)-\mathcal{E}^\loss(f_\rho^\loss)=
[\mathcal{E}^\loss(f_{\mathbf{Z}, \mathcal{H}}^\loss)-\mathcal{E}^\loss(f_\mathcal{H}^\loss)]+
[\mathcal{E}^\loss(f_\mathcal{H}^\loss)-\mathcal{E}^\loss(f_\rho^\loss)].$$
We call the first part
$\mathcal{E}^\loss(f_{\mathbf{Z}}^\loss)-\mathcal{E}^\loss(f_\mathcal{H}^\loss)$ the {\it sample
error}, 
the second part $\mathcal{E}^\loss(f_\mathcal{H}^\loss)-\mathcal{E}^\loss(f_\rho^\loss)$ the {\it
approximation error}.
The approximation error is independent of the sample and it is well studied in
\cite{cucker07}.
In this paper, we concentrate on the sample error.

Another challenge about the ERM approach is how to choose a proper hypothesis space.
Intuitively, a small hypothesis space brings a large approximation error,
while large hypothesis space results in over-fitting.
Hence the hypothesis space must be chosen to be not
too large or too small. It is closely related to the bias-variance
problem (see, e.g., Section 1.5 of \cite{cucker07}). In learning theory, 
the complexity of the hypothesis space is usually measured in terms of covering number, 
entropy number, VC-dimension, etc. 
In this paper, we will use the covering numbers defined below to measure the capacity of our hypothesis space ${\mathcal{H}}.$

In this paper, we focus on the ERM approach associated with the least square loss function, that is $\loss(f({\bf x}), y)=(f({\bf x})-y)^2.$
Note that our analysis can easily be extended to general loss functions case.

\subsection{Estimating the sample error}\label{subsec:sampleerror}
For the sake of conciseness, we denote $f_{\mathbf{Z}, \mathcal{H}}^\loss,$ $f_\mathcal{H}^\loss,$ $\mathcal{E}_{\mathbf{Z}}^\loss(f)$
and $\mathcal{E}^\loss(f)$ as $f_{\mathbf{Z}},$ $f_\mathcal{H},$ $\mathcal{E}_{\mathbf{Z}}(f)$ and $\mathcal{E}(f)$ respectively.
Now we are in a position to estimate the sample error
$\mathcal{E}_\mathcal{H}(f_{\mathbf{Z}})$.
The definition of $f_{\mathbf{Z}}$ tells us that $\mathcal{E}_{\mathbf{Z}}(f_{\mathbf{Z}})-\mathcal{E}_{\mathbf{Z}}(f_\mathcal{H})\le 0,$ therefore the sample error can be decomposed as
\begin{eqnarray*}
&&\mathcal{E}_\mathcal{H}(f_{\mathbf{Z}})=\mathcal{E}(f_{\mathbf{Z}})-\mathcal{E}(f_\mathcal{H})
=[\mathcal{E}(f_{\mathbf{Z}})-\mathcal{E}_{\mathbf{Z}}(f_{\mathbf{Z}})]\\
&&+[\mathcal{E}_{\mathbf{Z}}(f_{\mathbf{Z}})-
\mathcal{E}_{\mathbf{Z}}(f_\mathcal{H})]+[\mathcal{E}_{\mathbf{Z}}(f_\mathcal{H})-\mathcal{E}(f_\mathcal{H})]\\
&&\leq [\mathcal{E}(f_{\mathbf{Z}})-\mathcal{E}_{\mathbf{Z}}(f_{\mathbf{Z}})]+[\mathcal{E}_{\mathbf{Z}}(f_\mathcal{H})-\mathcal{E}(f_\mathcal{H})].
\end{eqnarray*}

Before stating the results, we first introduce some notations and definitions.
\begin{definition}\label{def:coveringnumber}
Let $S$ be a metric space and $\tau>0.$
We define the {\it covering number} $\mathcal{N}(S,\tau)$ to be the minimal $\ell\in\mathbb{N}$ such that there exists $\ell$
disks in S with radius $\tau$ covering $S$.
When $S$ is compact, this number is finite.
\end{definition}

\begin{definition}\label{def:Mbounded}
Let $M>0$ and $\rho$ be a probability distribution on ${\cal Z}.$
We say that a set $\cal{H}$ of functions from $\mathcal{X}$ to $\mathcal{Y}$ is {\it M-bounded} when
$$\sup_{f\in\mathcal{H}}|f({\bf x})-y|\leq M$$
holds almost everywhere on $\mathcal{Z}.$
\end{definition}

To bound the sample error, the Bernstein inequality is used \cite{bernstein24}. 

\begin{theorem}\label{thm:iidinequalities}
Let $\mathbf{Z}$ be an i.i.d sample and $\xi$ be a function defined on the space ${\cal Z}$ with mean $\mathbf{E}(\xi) = \mu$,
variance $\sigma^2(\xi) = \sigma^2$ and satisfying $|\xi(\mathbf{z})-\mu|\le M$ for almost all $\mathbf{z} \in {\cal Z}$.  Then for all $\epsilon > 0$,
\begin{align*}
& \Pr\left(\frac{1}{m}\sum_{i} \xi(\mathbf{z}_i) - \mu \ge \epsilon \right)
\le \exp \left( -\frac{m\epsilon^2}{2(\sigma^2+\frac{1}{3}M\epsilon)} \right). \\
\end{align*}
\end{theorem}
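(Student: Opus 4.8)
The plan is to use the standard Chernoff (exponential moment) method. First I would center the variable: set $\eta_i = \xi(\mathbf{z}_i) - \mu$, so that the $\eta_i$ are i.i.d. with $\mathbf{E}[\eta_i] = 0$, $\mathbf{E}[\eta_i^2] = \sigma^2$, and $|\eta_i| \le M$ almost surely. The event in question becomes $\{\sum_{i=1}^m \eta_i \ge m\epsilon\}$. For any $\lambda > 0$, Markov's inequality applied to $e^{\lambda \sum_i \eta_i}$ together with independence gives
$$\Pr\Big(\sum_i \eta_i \ge m\epsilon\Big) \le e^{-\lambda m \epsilon}\big(\mathbf{E}[e^{\lambda\eta_1}]\big)^m.$$

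The heart of the argument is to bound the moment generating function $\mathbf{E}[e^{\lambda\eta_1}]$. Expanding the exponential in a power series, using $\mathbf{E}[\eta_1] = 0$, and invoking the moment estimates $|\mathbf{E}[\eta_1^j]| \le M^{j-2}\mathbf{E}[\eta_1^2] = M^{j-2}\sigma^2$ for $j \ge 2$ (immediate from $|\eta_1| \le M$), one gets, for $0 < \lambda < 3/M$,
$$\mathbf{E}[e^{\lambda\eta_1}] \le 1 + \sigma^2\sum_{j\ge 2}\frac{\lambda^j M^{j-2}}{j!} \le 1 + \frac{\sigma^2\lambda^2/2}{1 - \lambda M/3} \le \exp\Big(\frac{\sigma^2\lambda^2/2}{1 - \lambda M/3}\Big),$$
where the middle step uses $j! \ge 2\cdot 3^{j-2}$ to sum a geometric series and the last uses $1 + x \le e^x$. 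Substituting back yields $\Pr(\sum_i\eta_i \ge m\epsilon) \le \exp\big(-\lambda m\epsilon + \frac{m\sigma^2\lambda^2/2}{1-\lambda M/3}\big)$.

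It then remains to choose $\lambda$ well. Taking $\lambda = \epsilon/(\sigma^2 + M\epsilon/3)$, which lies in $(0, 3/M)$, a short computation (one finds $1 - \lambda M/3 = \sigma^2/(\sigma^2 + M\epsilon/3)$) collapses the exponent to $-m\epsilon^2/(2(\sigma^2 + \tfrac{1}{3}M\epsilon))$, which is exactly the claimed bound. The only delicate point is the moment-generating-function estimate, in particular the geometric-series trick that converts the Bennett-type exponent into the Bernstein form; everything downstream of the choice of $\lambda$ is routine algebra. Since the statement is precisely the classical one-sided Bernstein inequality, one may alternatively just cite \cite{bernstein24}.
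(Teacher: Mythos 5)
Your proof is correct: the Chernoff argument, the moment bound $\mathbf{E}[\eta_1^j]\le M^{j-2}\sigma^2$, the geometric-series estimate via $j!\ge 2\cdot 3^{j-2}$, and the choice $\lambda=\epsilon/(\sigma^2+M\epsilon/3)$ all check out, and the final algebra does collapse the exponent to $-m\epsilon^2/\big(2(\sigma^2+\tfrac{1}{3}M\epsilon)\big)$. The paper itself offers no proof of this theorem --- it is quoted directly from Bernstein --- so the relevant comparison is with the paper's proof of the weighted analogue (Lemma~\ref{lem:bennett} together with Lemma~\ref{lem:three.sum}), which follows the same exponential-moment skeleton but handles the moment generating function differently: there the full Taylor series is summed exactly to $\exp\big(\tfrac{e^{cM}-1-cM}{M^2}\sigma^2\big)$, the parameter is chosen as $c=\tfrac{1}{M}\log(1+\tfrac{M\epsilon}{\mathsf{s}\sigma^2})$ to obtain Bennett's inequality with $h(a)=(1+a)\log(1+a)-a$, and the Bernstein form is then extracted via the elementary bound $h(a)\ge \tfrac{3a^2}{6+2a}$. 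Your route instead weakens the MGF bound to the rational form $\tfrac{\sigma^2\lambda^2/2}{1-\lambda M/3}$ up front and optimizes $\lambda$ for that form directly. The trade-off is minor: the paper's detour through Bennett yields the strictly sharper Bennett inequality as a free intermediate result, while your version is slightly more self-contained and avoids having to verify the inequality for $h$. One cosmetic remark: you only need the upper bound $\mathbf{E}[\eta_1^j]\le M^{j-2}\sigma^2$, not the two-sided bound on $|\mathbf{E}[\eta_1^j]|$, but of course the latter implies the former, so nothing is amiss.
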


We estimate the sample error by the above concentration inequality. 
In this paper, we omit the details of the proof and directly quote the following result from \cite{cucker07}.

\begin{theorem}
Let $\mathcal{H}$ be a compact and convex subset of $\mathcal{C}(\mathcal{X})$.
If $\mathcal{H}$ is M-bounded, then for all $\epsilon>0$,
$$\Pr\big(\mathcal{E}_\mathcal{H}(f_{\mathbf{Z}})\ge \epsilon\big)\le \mathcal{N}\Big(\mathcal{H},\frac{\epsilon}{12M}\Big)  \exp\Big(-\frac{m\epsilon}{300M^4}\Big).$$
\end{theorem}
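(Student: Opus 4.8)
The plan is to run the classical ERM sample–error argument (as in Cucker–Smale / Cucker–Zhou): Bernstein's inequality (Theorem~\ref{thm:iidinequalities}) for individual functions, the convexity of $\mathcal{H}$ to control variances, and a covering–number union bound to make the estimate uniform over $\mathcal{H}$. Write $L_f(\mathbf{z}) = (f(\mathbf{x})-y)^2$ and, for $f\in\mathcal{H}$, put $g_f = L_{f_\mathcal{H}} - L_f$. Then $\mathbf{E}(g_f) = \mathcal{E}(f_\mathcal{H}) - \mathcal{E}(f) = -\mathcal{E}_\mathcal{H}(f)$ and $\tfrac1m\sum_i g_f(\mathbf{z}_i) = \mathcal{E}_{\mathbf{Z}}(f_\mathcal{H}) - \mathcal{E}_{\mathbf{Z}}(f)$, so the two–bracket bound displayed above is precisely $\mathcal{E}_\mathcal{H}(f_{\mathbf{Z}}) \le \tfrac1m\sum_i g_{f_{\mathbf{Z}}}(\mathbf{z}_i) - \mathbf{E}(g_{f_{\mathbf{Z}}})$. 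Moreover, since $f_{\mathbf{Z}}$ minimises $\mathcal{E}_{\mathbf{Z}}$ over $\mathcal{H}$ we have $\tfrac1m\sum_i g_{f_{\mathbf{Z}}}(\mathbf{z}_i) \ge 0$, while $\mathbf{E}(g_{f_{\mathbf{Z}}}) = -\mathcal{E}_\mathcal{H}(f_{\mathbf{Z}}) \le 0$. Hence the event $\{\mathcal{E}_\mathcal{H}(f_{\mathbf{Z}}) \ge \epsilon\}$ is contained in the event that some $f\in\mathcal{H}$ with $t := \mathcal{E}_\mathcal{H}(f) \ge \epsilon$ has $\tfrac1m\sum_i g_f(\mathbf{z}_i) - \mathbf{E}(g_f) \ge t$, and it remains to bound the probability of the latter uniformly in $f$.

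The crucial quantitative input — and the only place convexity is used — is a variance–to–excess–risk bound. Since $\mathcal{H}$ is convex, the projection inequality for the $L^2_\rho$–minimiser $f_\mathcal{H}$ gives $\|f - f_\mathcal{H}\|_{L^2_\rho}^2 \le \mathcal{E}(f) - \mathcal{E}(f_\mathcal{H}) = \mathcal{E}_\mathcal{H}(f)$ for every $f\in\mathcal{H}$; combined with $M$–boundedness this yields $\sigma^2(g_f) \le \mathbf{E}(g_f^2) \le (2M)^2\,\mathbf{E}\big((f-f_\mathcal{H})^2\big) \le 4M^2\,\mathcal{E}_\mathcal{H}(f)$ and $|g_f| \le 2M^2$. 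Applying Bernstein to a fixed $g_f$ with variance at most $4M^2 t$, range $O(M^2)$, and target deviation $t \ge \epsilon$, the numerator $t^2$ cancels one factor of $t$ against the $O(M^2 t)$ denominator, leaving $\exp\!\big(-\Omega(mt/M^2)\big) \le \exp\!\big(-\Omega(m\epsilon/M^2)\big)$ — an exponent \emph{linear} in $\epsilon$, which is exactly what the stated bound needs. To pass from a fixed $f$ to all of $\mathcal{H}$, cover $\mathcal{H}$ by $\mathcal{N}\big(\mathcal{H},\tfrac{\epsilon}{12M}\big)$ balls of radius $\eta = \epsilon/(12M)$; the loss is Lipschitz, $|L_f(\mathbf{z}) - L_{f'}(\mathbf{z})| \le 2M\|f-f'\|_\infty$, so replacing $f$ by the centre of its ball perturbs $\mathcal{E}$ and $\mathcal{E}_{\mathbf{Z}}$ by at most $2M\eta = \epsilon/6$ each. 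Absorbing these slacks (keeping, say, excess risk $\ge 5\epsilon/6$ and deviation $\ge 2\epsilon/3$ at a centre) and a union bound over the centres produce the factor $\mathcal{N}\big(\mathcal{H},\tfrac{\epsilon}{12M}\big)$ in front and an exponent $-\Omega(m\epsilon)$ over a low–degree polynomial in $M$; choosing the numerical constants generously (and using that the claim is trivial when $\epsilon$ is too large for $\mathcal{E}_\mathcal{H}(f_{\mathbf{Z}}) \ge \epsilon$ to have positive probability) gives the displayed form with $12$ and $300M^4$.

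The \textbf{main obstacle} is the variance estimate and the accompanying ``ratio'' step. Without convexity one only has $\sigma^2(g_f) = O(M^4)$, and a naive uniform Bernstein over the cover then yields an exponent $\Omega(m\epsilon^2/M^4)$, which is too weak for small $\epsilon$; it is precisely the convexity inequality $\|f - f_\mathcal{H}\|_{L^2_\rho}^2 \le \mathcal{E}_\mathcal{H}(f)$ that injects the extra factor $\mathcal{E}_\mathcal{H}(f)\ge\epsilon$ into the variance and lets the $\epsilon^2$ in the Bernstein numerator collapse to the linear $\epsilon$ in the final exponent. Everything else — choosing the cover radius so the Lipschitz slacks add up, and chasing the constants $12$ and $300$ through the two brackets and the union bound — is routine bookkeeping.
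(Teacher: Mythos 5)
Your proposal is correct and follows essentially the same route as the paper: the paper omits the proof and quotes this result from \cite{cucker07}, whose argument is precisely your two--bracket decomposition, a Bernstein/ratio estimate using the convexity--based variance bound $\mathbf{E}(g_f^2)\le 4M^2\,\mathcal{E}_\mathcal{H}(f)$, and a covering--number union bound --- the same machinery the paper itself reproduces (with $\mathsf{s}$ in place of $m$) in its lemmas for the weighted analogue. The only piece you leave implicit is the bookkeeping yielding the constants $12$ and $300M^4$, which is exactly the part the paper also defers to the reference.
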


\section{Related work}\label{sec:related}
In this section, we discuss the related work.

\subsection{Dependency graphs}
As described in \cite{janson04}, a \emph{dependency graph} can be used to represent the relationship between the training examples in $\bf Z$.
The vertices of the dependency graph $\varGamma$ are the hyperedges in $G$, that is, $V(\varGamma) = E(G)$.
Thus, the vertices in the dependency graph also represent training examples in $\bf Z$. 
Two vertices are adjacent if the corresponding two hyperedges overlap, 
i.e., if two hyperedges $e_a$ and $e_b$ in $E(G)$ satisfy that there exists $j$ such that $e_a^{(j)} = e_b^{(j)}$, 
then the two vertices $e_a$ and $e_b$ are adjacent in $\varGamma$ 
and the induced examples ${\bf z}_a$ and ${\bf z}_b$ are not independent. 

\subsection{The chromatic-number bound}
In \cite{janson04}, the author shows an inequality which can be used to bound the error on averaging a function over networked sample.

\begin{theorem}\label{thm:chromatics}
Let $\mathbf{Z}$ be a $G$-networked sample and $\xi$ be a function defined on the space ${\cal Z}$ with mean $\mathbf{E}(\xi) = \mu$,
and satisfying $|\xi(\mathbf{z})-\mu|\le M$ for almost all $\mathbf{z} \in {\cal Z}$.  Then for all $\epsilon > 0$,
\begin{align*}
& \Pr\left(\frac{1}{m}\sum_{i} \xi(\mathbf{z}_i) - \mu \ge \epsilon \right)
\le \exp \left(-\frac{8m\epsilon^2}{25\chi^{*}(\varGamma)(\sigma^2+M\epsilon/3)} \right),\\
\end{align*}
where $\chi^{*}(\varGamma)$ is the fractional chromatic number of the dependency graph.
\end{theorem}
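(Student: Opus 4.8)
We sketch the proof of \cite{janson04}. The plan is to control the moment generating function of $S=\sum_{i=1}^m\bigl(\xi(\mathbf{z}_i)-\mu\bigr)$ by the Chernoff method, using a fractional cover of the dependency graph $\varGamma$ by independent sets to reduce the estimate to the fully independent case, where the moment generating function bound behind Bernstein's inequality (Theorem~\ref{thm:iidinequalities}) applies. Put $Y_i=\xi(\mathbf{z}_i)-\mu$, so $\mathbf{E}Y_i=0$, $|Y_i|\le M$, $\sigma^2(Y_i)=\sigma^2$; since $\Pr(S\ge m\epsilon)\le e^{-tm\epsilon}\mathbf{E}[e^{tS}]$ for every $t>0$, it suffices to bound $\mathbf{E}[e^{tS}]$. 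Fix an optimal fractional cover of $\varGamma$ by independent sets $I_1,\dots,I_r$ with weights $w_j\ge0$ such that $\sum_j w_j=\chi^*(\varGamma)$ and $\sum_{j:\,i\in I_j}w_j=1$ for every vertex $i$ (an optimal fractional cover can always be taken exact in this sense). Exactness gives $S=\sum_j w_jS_j$ with $S_j=\sum_{i\in I_j}Y_i$, and $\sum_j w_j|I_j|=\sum_i\sum_{j:\,i\in I_j}w_j=m$; the second identity is what yields the effective sample size $m/\chi^*(\varGamma)$.

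Next, split the moment generating function. Writing $tw_jS_j=\tfrac{w_j}{\chi^*(\varGamma)}\bigl(t\chi^*(\varGamma)S_j\bigr)$ and applying the generalized Hölder inequality with exponents $\chi^*(\varGamma)/w_j$ (whose reciprocals sum to $1$) gives
\[
\mathbf{E}[e^{tS}]=\mathbf{E}\Bigl[\prod_j\bigl(e^{t\chi^*(\varGamma)S_j}\bigr)^{w_j/\chi^*(\varGamma)}\Bigr]\le\prod_j\bigl(\mathbf{E}[e^{t\chi^*(\varGamma)S_j}]\bigr)^{w_j/\chi^*(\varGamma)}.
\]
Since the hyperedges indexed by an independent set $I_j$ are pairwise vertex-disjoint, the examples $\{\mathbf{z}_i:i\in I_j\}$ are mutually independent (this is exactly why the overlap graph $\varGamma$ is a dependency graph), so $\mathbf{E}[e^{t\chi^*(\varGamma)S_j}]=\prod_{i\in I_j}\mathbf{E}[e^{t\chi^*(\varGamma)Y_i}]$. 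Bounding each factor by the Bernstein estimate $\mathbf{E}[e^{sY_i}]\le e^{g(s)}$ with $g(s)=\tfrac{\sigma^2s^2/2}{1-sM/3}$ (valid for $0<sM<3$, and the estimate underlying Theorem~\ref{thm:iidinequalities}), and collecting exponents using $\sum_j w_j|I_j|=m$, one obtains $\mathbf{E}[e^{tS}]\le\exp\bigl(\tfrac{m}{\chi^*(\varGamma)}g(t\chi^*(\varGamma))\bigr)$.

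Finally, substitute into $\Pr(S\ge m\epsilon)\le e^{-tm\epsilon}\mathbf{E}[e^{tS}]$ and optimize over $t$. With $s=t\chi^*(\varGamma)$ the exponent becomes $\tfrac{m}{\chi^*(\varGamma)}\bigl(-s\epsilon+g(s)\bigr)$, i.e.\ the ordinary Bernstein optimization rescaled by $m/\chi^*(\varGamma)$, so minimizing over $s\in(0,3/M)$ gives a bound of the form $\exp\bigl(-\tfrac{c\,m\epsilon^2}{\chi^*(\varGamma)(\sigma^2+M\epsilon/3)}\bigr)$; the optimal $s$ gives $c=\tfrac12$, and the explicit, slightly suboptimal choice of $t$ used in \cite{janson04} gives the stated $c=\tfrac{8}{25}$.

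The Bernstein estimate and the one-variable optimization are routine; the crux is the splitting step. One must retain the Hölder product $\prod_j(\cdot)^{w_j/\chi^*(\varGamma)}$ rather than crudely bounding each $\mathbf{E}[e^{t\chi^*(\varGamma)S_j}]$ by $\exp\bigl(\alpha(\varGamma)\,g(t\chi^*(\varGamma))\bigr)$ — which would yield only an $\alpha(\varGamma)$-type bound — so that the weighted average $\sum_j\tfrac{w_j}{\chi^*(\varGamma)}|I_j|=m/\chi^*(\varGamma)$ is what appears in the exponent. That, together with producing an exact optimal fractional cover of $\varGamma$ and checking the within-independent-set independence in the $k$-partite model, is where the real content lies.
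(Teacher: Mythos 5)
The paper does not prove this theorem itself---it quotes it directly from \cite{janson04}---and your reconstruction is essentially the argument of that reference: an exact optimal fractional cover of $\varGamma$ by independent sets, the generalized H\"older splitting of the moment generating function with exponents $\chi^{*}(\varGamma)/w_j$, mutual independence of the vertex-disjoint hyperedges within each $I_j$, the per-variable Bernstein bound $\mathbf{E}\bigl[e^{sY_i}\bigr]\le\exp\bigl(\tfrac{\sigma^2 s^2/2}{1-sM/3}\bigr)$, and the identity $\sum_j w_j|I_j|=m$. This is sound; indeed the clean optimization you describe yields the constant $\tfrac{1}{2}$ in the exponent in place of $\tfrac{8}{25}$, which implies the stated (weaker) bound.
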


Let us now consider a learning strategy we call EQW (EQual Weight) and which learns from a set of networked examples in the same way as if they were i.i.d. 
(i.e. without weighting them as a function of the network structure).
We can use Theorem \ref{thm:chromatics} above to bound the sample error of EQW:

\begin{theorem}
Let $\mathcal{H}$ be a compact and convex subset of $\mathcal{C}(\mathcal{X})$, and $\mathbf{Z}$ be a $G$-networked sample.
If $\mathcal{H}$ is M-bounded, then for all $\epsilon>0$,
\begin{align*}
&\Pr\big(\mathcal{E}_\mathcal{H}(f_{\mathbf{Z}})\ge \epsilon\big)\le \mathcal{N}\Big(\mathcal{H},\frac{\epsilon}{12M}\Big)  
\exp\Big(-\frac{3m\epsilon}{1400\chi^{*}(\varGamma)M^4}\Big).\\
\end{align*}
\end{theorem}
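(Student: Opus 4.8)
The plan is to reuse the ERM sample-error analysis of \cite{cucker07} essentially verbatim, replacing every invocation of the i.i.d.\ Bernstein inequality by the chromatic-number concentration inequality of Theorem \ref{thm:chromatics}, and then to track the constants. One observation makes this legitimate at the outset: for any $g$ that depends only on $\mathbf{z}_i$, the variables $g(\mathbf{z}_i)$ inherit the dependency graph $\varGamma$ of $\mathbf{Z}$ (they create no new overlaps), so Theorem \ref{thm:chromatics} applies to them with the same $\chi^*(\varGamma)$.

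I would start from the decomposition recalled in Section \ref{subsec:sampleerror},
\[
\mathcal{E}_\mathcal{H}(f_{\mathbf{Z}})\le \big[\mathcal{E}(f_{\mathbf{Z}})-\mathcal{E}_{\mathbf{Z}}(f_{\mathbf{Z}})\big]+\big[\mathcal{E}_{\mathbf{Z}}(f_\mathcal{H})-\mathcal{E}(f_\mathcal{H})\big],
\]
and bound the two bracketed terms separately, each at level $\epsilon/2$. The second term concerns only the single fixed function $f_\mathcal{H}$, so I apply Theorem \ref{thm:chromatics} to $\eta(\mathbf{z})=(f_\mathcal{H}(\mathbf{x})-y)^2$, using $0\le\eta\le M^2$ (by $M$-boundedness) and $\sigma^2(\eta)\le M^2\mathcal{E}(f_\mathcal{H})$, together with the standard $\sqrt{\mathcal{E}(f_\mathcal{H})+\epsilon}$ normalization, to produce an exponential tail. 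The first term is the real work, since $f_{\mathbf{Z}}$ depends on the sample, so it requires a uniform bound over $\mathcal{H}$.

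For the first term, put $\xi_f(\mathbf{z})=(f(\mathbf{x})-y)^2-(f_\mathcal{H}(\mathbf{x})-y)^2$ for $f\in\mathcal{H}$, so that $\mathbf{E}(\xi_f)=\mathcal{E}_\mathcal{H}(f)\ge0$ and $\mathcal{E}(f_{\mathbf{Z}})-\mathcal{E}_{\mathbf{Z}}(f_{\mathbf{Z}})=\mathbf{E}(\xi_{f_{\mathbf{Z}}})-\frac1m\sum_i\xi_{f_{\mathbf{Z}}}(\mathbf{z}_i)$. As in \cite{cucker07}, the crucial estimate is the variance-to-mean bound $\sigma^2(\xi_f)\le c\,M^2\,\mathbf{E}(\xi_f)$; it follows from $|\xi_f(\mathbf{z})|\le 2M\,|f(\mathbf{x})-f_\mathcal{H}(\mathbf{x})|$ together with the inequality $\mathbf{E}(\xi_f)\ge\mathbf{E}\big[(f(\mathbf{x})-f_\mathcal{H}(\mathbf{x}))^2\big]$, where convexity of $\mathcal{H}$ is exactly what is needed to make $f_\mathcal{H}$ the $L^2_{\rho_{\mathbf{x}}}$-projection so that the relevant cross term is nonnegative. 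Substituting this variance bound into Theorem \ref{thm:chromatics} (applied to $-\xi_f$, since we want the lower tail of the empirical average) and optimizing in the usual way yields, for a single $f$, a ratio bound of the shape $\Pr\big(\mathbf{E}(\xi_f)-\frac1m\sum_i\xi_f(\mathbf{z}_i)\ge\tfrac12\sqrt{(\mathbf{E}(\xi_f)+\epsilon)\,\epsilon}\,\big)\le\exp\big(-c'm\epsilon/(\chi^*(\varGamma)M^2)\big)$. I then pass to all of $\mathcal{H}$ via Definition \ref{def:coveringnumber}: take a $\tau$-net of $\mathcal{H}$ in $\mathcal{C}(\mathcal{X})$ of cardinality $\mathcal{N}(\mathcal{H},\tau)$, use that $f\mapsto\mathcal{E}(f)$ and $f\mapsto\mathcal{E}_{\mathbf{Z}}(f)$ are $2M$-Lipschitz in $\|\cdot\|_\infty$ (on the $M$-bounded set $\mathcal{H}$), choose $\tau=\epsilon/(12M)$ so that moving to the nearest net point only disturbs things by a controlled fraction of $\epsilon$, and union-bound over the net. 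Converting the surviving ratio bound into a bound on $\mathcal{E}(f_{\mathbf{Z}})-\mathcal{E}_{\mathbf{Z}}(f_{\mathbf{Z}})\le\epsilon/2$ and adding the second-term bound gives the claim.

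The main obstacle is not conceptual but is the bookkeeping of constants: Theorem \ref{thm:chromatics} carries the factor $8/25$ in place of the $1/2$ of the i.i.d.\ Bernstein inequality, plus the extra $\chi^*(\varGamma)$, and these must be carried through the variance substitution, the $\sqrt{\mathcal{E}(f)+\epsilon}$ normalization, the covering union bound, and the final conversion so that the exponent comes out as exactly $-\tfrac{3m\epsilon}{1400\chi^*(\varGamma)M^4}$ and the covering radius as exactly $\epsilon/(12M)$. The natural sanity check is that putting $\chi^*(\varGamma)=1$ reproduces, up to the deliberately loose rounding of numerical constants, the i.i.d.\ bound $\mathcal{N}(\mathcal{H},\epsilon/(12M))\exp(-m\epsilon/(300M^4))$ quoted earlier from \cite{cucker07}; matching it is what confirms the constants have been propagated correctly.
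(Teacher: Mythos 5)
Your proposal is correct and follows exactly the route the paper intends: the paper omits the proof, merely stating that Theorem \ref{thm:chromatics} can be substituted for the i.i.d.\ Bernstein inequality in the standard sample-error argument of \cite{cucker07}, which is precisely what you carry out (decomposition, single-function bound for $f_\mathcal{H}$, ratio/variance-to-mean bound using convexity, $\epsilon/(12M)$-net union bound). Your observation that the transformed variables $g(\mathbf{z}_i)$ inherit the same dependency graph $\varGamma$, and your $\chi^*(\varGamma)=1$ sanity check on the constants, are both sound.
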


The result above shows that the bound of the sample error does not only rely on the sample size 
but also the fractional chromatic number of the dependency graph. 
That is, a larger sample may result in a poorer sample error bound since $\chi^{*}(\varGamma)$ can also become larger.

\subsection{Mixing conditions}
There is also some literature on learning from a sequence of examples where examples closeby in the sequence are dependent.
In the community of machine learning, 
mixing conditions are usually used to quantify the dependence of sample points and are usually used in time series analysis.
For example, in \cite{guo11}, the learning performance of a regularized classification algorithm using a non-i.i.d. sample is investigated,
where the independence restriction is relaxed to so-called $\alpha$-mixing or $\beta$-mixing conditions.
In \cite{sun10}, regularized least square regression with dependent samples is considered under the assumption that the training sample satisfies some mixing conditions.
In \cite{modha96}, the authors established a Bernstein type 
inequality is presented for stationary exponentially $\alpha$-mixing processes,
which is based on the effective number (less than the sample size).
Our Bernstein type inequalities for dependent network data 
too assigns weights to examples. 
However, the assumptions for the the training sample are different, and the main techniques are distinct. 
Moreover, in practice, it is not easy to check whether 
the training sample satisfies the mixing conditions. 
Our networked training examples certainly do not satisfy any of these mixing conditions.
We refer interested readers to \cite{bradley05} and references therein for more details about the mixing conditions.

\subsection{Hypothesis tests}
In \cite{wang:correcting}, the authors consider a similar setting of networked examples.  
They also use the dependency graph to represent the examples and their realations. 
While we assume a worst case over all possible dependencies, 
and allow to model explicitely causes of dependencies (represented with vertices which can be incident with more than two edges), this work assumes a bounded covariance between pairs of examples connected with an edge (excluding possible higher-order interactions).
While we use our model to show learning guarantees, 
\cite{wang:correcting} shows corrections for the bias (induced by the dependencies between examples) on statistical hypothesis tests.  
It seems plausible that both models can be applied for both the learning guarantee and statistical testing tasks.

\section{Selecting an independent subset of training examples}\label{sec:mis}
A straightforward idea to learn from a $G$-networked sample $\mathbf{Z}$ is to
find a subset $\mathbf{Z}_I \subseteq \mathbf{Z}$ of training examples which 
correspond to non-overlapping hyperedges.
Due to our assumptions, such set will be an i.i.d. sample.
We can then perform algorithms on $\mathbf{Z}_I$ for learning.
We call this method the IND method.
To bound the sample error of this method, we can directly use the result in Section \ref{sec:pre}.

The key step of the IND method is to find a large $\mathbf{Z}_I$.
The larger $|\mathbf{Z}_I|$ is, the higher will be the expected accuracy of 
$f_{\mathbf{Z}_I}$.
If two hyperedges $e_a$ and $e_b$ in $G$ do not share any vertex,
i.e., $e_a^{(i)} \neq e_b^{(i)}$ for all $ 1\le i\le k$,
the two induced examples $\mathbf{z}_a$ and $\mathbf{z}_b$ are independent.
Therefore, finding a subset $\mathbf{Z}_I$ from the training dataset $\mathbf{Z}$,
is equivalent to finding an independent set in the dependency graph $\varGamma$, 
and is also equivalent to finding a hypergraph matching in $G$. 

For any dependency graph $\varGamma$, it holds that (see, e.g., \cite{diestel10}),
$$ \frac{m}{\chi^{*}(\varGamma)} \le \alpha(\varGamma) $$
where $\alpha$ is the independence number.
If we can find a maximum independent set of the dependency graph $\varGamma$, 
then the bound of the IND method will be better than that of the EQW method.

However, It is NP-hard to find a maximum independent set in $\varGamma$ or equivalently to find a maximum matching in $G$ when $k \ge 3$ \cite{garey79}.
Therefore, the IND method is not effective in practice since 
it is difficult to find a large independent set of networked examples.

\section{A weighting method}\label{sec:weighting}
In this section, we propose a computationally efficient method based on a weighting strategy.
It allows for a better bound of the sample error than the IND and EQW methods.

\subsection{Feasible weighting}
Given a hypergraph $G$, we weight every hyperedge $e_i$ with a nonnegative value $w_i$.
We use the notation $w_F$ to denote the sum $\sum_{i\in F} w_i$ over a set of indices $F \subseteq \{1,\ldots,n\}$ of hyperedges,
and $\eta(v)$ to denote the set of indices of hyperedges incident on a vertex $v \in V$.
We say that $\mathbf{w} = [w_1,\ldots, w_n]$ is a \emph{feasible weighting} of a hypergraph $G$ 
if for all $i$ it holds that $w_i\ge 0$ and for all $v\in V$ it holds that $w_{\eta(v)} \le 1$.  

For a hypergraph $G$, its $\mathsf{s}$-value is defined as follows:

\ifx\arxiv\undefined
\begin{align*}
\mathsf{s}(G) = &\\
\max_{\mathbf{w}} &\left\{ \sum_{e_i\in E} w_i : \mathbf{w} \text{ is a feasible weighting for }G  \right\}
\label{eq:s}
\end{align*}

\else
\begin{equation*}
\mathsf{s}(G) = \max_{\mathbf{w}} \left\{ \sum_{e_i\in E} w_i : \mathbf{w} \text{ is a feasible weighting for }G  \right\}
\label{eq:s} 
\end{equation*}
\fi

Notice that these constraints form a linear program on $\mathbf{w}$.
We call a $\mathbf{w}$ which makes the linear program maximal an \emph{optimal weighting}.
There exist efficient methods to solve the linear program formed by the above-mentioned constraints and hence 
compute an optimal weighting and the $\mathsf{s}$-value, 
e.g., interior point methods \cite{boyd04}.
An optimal weighting can be considered as a fractional maximum hypergraph matching \cite{lovasz75,chan12}.
One can show that the value $\mathsf{s}(G)$ is always greater than or equal to the size of a maximum hypergraph matching in any hypergraph $G$.

For a $G$-networked sample $\mathbf{Z}$, we denote the weighted sample $\mathbf{Z}_{\mathsf{s}} = \{(\mathbf{x}_i,y_i,w_i)\}$
where $[w_1,\ldots,w_n]$ is an optimal weighting.
Now we can define a new empirical risk on the weighted sample $\mathbf{Z}_{\mathsf{s}}$ by
$$\mathcal{E}_{\mathsf{s}}(f)=\frac{1}{\mathsf{s}}\sum_{i=1}^n w_i (f({\bf x}_i)-y_i)^2.$$
In the following, we will show the sample error bound of an ERM approach with $\mathbf{Z}_{\mathsf{s}}$.

\subsection{Exponential inequalities}\label{subsec:inequalities}
In Section \ref{sec:pre}, the Bernstein inequality 
is used to estimate the sample error.
A key property used for proving the Bernstein inequality is that all observations are independent.
That is, if $\xi_1,\ldots,\xi_m$ are independent random variables, then
$$\mathbf{E} \exp\left(\sum_{i=1}^m \xi_i\right) = \prod_{i=1}^m \mathbf{E} e^{\xi_i}.$$
However, when learning from networked training examples, the equality can not be used.

\ifx\arxiv\undefined
\else
In this section, we claim that, 
\begin{theorem}\label{thm:exponential}
Given a $G$-networked training dataset $\mathbf{Z}$ where $G=(V,E,{\cal X},{\cal Y},\phi)$,
and a function $\xi$ defined on ${\cal Z}$,
if $\mathbf{w}$ is a feasible weighting of $G$,
then
$$\mathbf{E}\exp\left(\sum_{i=1}^n{w_i \xi(\mathbf{z}_i)}\right) \le \prod_{i=1}^n \left(\mathbf{E} e^{\xi(\mathbf{z})}\right)^{w_i}.$$
\end{theorem}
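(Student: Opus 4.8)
The plan is to exploit the $k$-partite structure of $G$ to decompose the joint expectation vertex-partition by vertex-partition, using the fact that the features on distinct partitions are drawn independently. First I would write each $\xi(\mathbf{z}_i)$ in a way that exposes its dependence on the vertices of $e_i$. Since $\mathbf{z}_i = (\phi(e_i^{(1)}),\ldots,\phi(e_i^{(k)}), y_i)$ and the $y_i$ are drawn i.i.d.\ given the feature vectors, I can first take the conditional expectation over the labels; it then suffices to bound $\mathbf{E}\exp(\sum_i w_i g(\mathbf{x}_i))$ where $g(\mathbf{x}) = \log \mathbf{E}_{y\mid \mathbf{x}} e^{\xi(\mathbf{x},y)}$ (or just work with $\xi$ directly after noting the label contribution factors the same way). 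The real content is the feature side: the vertex features $\{x^{(i,j)}\}$ are mutually independent across all $i,j$, and each example depends only on the $k$ vertices incident to its hyperedge.

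The key step is an inductive ``peeling'' argument over the vertices of one partition, say $V^{(k)}$. Condition on all features except those of $V^{(k)}$. The hyperedges are then partitioned into groups according to which vertex of $V^{(k)}$ they contain; edges in different groups are now conditionally independent (they share no remaining random vertex), so the conditional expectation of $\exp(\sum_i w_i \xi(\mathbf{z}_i))$ factors over the groups $\eta(v^{(k,j)})$. Within the group attached to $v^{(k,j)}$, the single random variable $x^{(k,j)}$ is shared, and the total weight satisfies $\sum_{i\in\eta(v^{(k,j)})} w_i \le 1$ by feasibility. This is exactly the situation where I would apply a convexity / Hölder-type inequality: for a collection of functions $h_i$ of the common variable $x$ and nonnegative weights $w_i$ summing to at most $1$, one has $\mathbf{E}_x \exp(\sum_i w_i h_i(x)) \le \prod_i (\mathbf{E}_x e^{h_i(x)})^{w_i}$ — when the weights sum to exactly $1$ this is Jensen applied to the exponential after Hölder (the generalized Hölder inequality with exponents $1/w_i$); when the sum is $<1$ one absorbs the slack against the constant $1$, using $\mathbf{E} e^{h} \ge e^{\mathbf{E} h}$ or simply adding a dummy term. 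Here $h_i$ is the remaining randomness in $\xi(\mathbf{z}_i)$ after all the earlier conditioning has been folded in.

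Carrying this out, after eliminating partition $V^{(k)}$ I am left with an expression of the form $\mathbf{E}[\prod_i (\cdots)^{w_i}]$ over the features of $V^{(1)},\ldots,V^{(k-1)}$, and I repeat the same peeling on $V^{(k-1)}$, and so on; at each stage feasibility of $\mathbf{w}$ (restricted to the relevant partition) is what licenses the Hölder step, and the weights $w_i$ are carried along as exponents. After $k$ steps all randomness is exhausted and the bound collapses to $\prod_{i=1}^n (\mathbf{E} e^{\xi(\mathbf{z})})^{w_i}$, noting that $\mathbf{E} e^{\xi(\mathbf{z}_i)} = \mathbf{E} e^{\xi(\mathbf{z})}$ for every $i$ because each $\mathbf{z}_i$ has the same marginal distribution $\rho$ under our assumptions.

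The main obstacle I anticipate is making the induction bookkeeping rigorous: after the first peeling the factors are no longer of the clean form ``$e^{\xi}$ of an independent example'' but are products of fractional powers of conditional moment generating functions, so I must state the induction hypothesis in the right generality — something like ``for any family of nonnegative functions indexed by the edges, with exponents forming a feasible weighting on the sub-hypergraph induced by the not-yet-peeled partitions, the corresponding bound holds'' — and check that the Hölder step preserves exactly this form (in particular that the product of the per-group bounds reassembles into a single product over all edges with the original exponents). A secondary subtlety is handling the inequality $w_{\eta(v)} \le 1$ rather than $=1$ uniformly; I would dispose of this once and for all by a normalization/padding lemma so the core induction only deals with the equality case.
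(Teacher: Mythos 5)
Your proposal is correct and follows essentially the same route as the paper: condition on the vertex features to factor out the labels, then peel off the shared vertex features one at a time, using feasibility ($\sum_{i\in\eta(v)}w_i\le 1$) together with a generalized H\"older/Jensen step --- which is exactly the paper's Lemma on the concavity of the weighted geometric mean --- to push each expectation inside the product with exponent $w_i$. The bookkeeping concerns you raise are real but are handled (indeed, somewhat glossed over) the same way in the paper, and the $\le 1$ versus $=1$ issue is absorbed there by padding with a dummy index $\beta_0=1-\sum_i\beta_i$, just as you suggest.
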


Before we prove it, we give a lemma which will be used in the proof.
The lemma says that the weighted geometric mean function is concave.

\begin{lemma}\label{lem:concave}
If $\mathbf{\beta} = [\beta_1, \ldots, \beta_k] \in \mathbb{R}^k_+$ such that $\sum_{i=1}^k \beta_i \le 1$
and ${\bf t} = [t_1, \ldots, t_k] \in \mathbb{R}^k_+ $,
then the function $g(t) = \prod_{i=1}^k t_i^{\beta_i}$ is concave.
\end{lemma}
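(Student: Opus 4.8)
The plan is to show that $g(\mathbf{t}) = \prod_{i=1}^k t_i^{\beta_i}$ is concave on $\mathbb{R}^k_+$ by reducing to a one-dimensional statement: for any two points $\mathbf{t}, \mathbf{t}' \in \mathbb{R}^k_+$ and any $\lambda \in [0,1]$, I must check $g(\lambda \mathbf{t} + (1-\lambda)\mathbf{t}') \ge \lambda g(\mathbf{t}) + (1-\lambda) g(\mathbf{t}')$. The key classical tool is the weighted arithmetic–geometric mean inequality: for nonnegative $a_i$ and weights $\gamma_i \ge 0$ with $\sum_i \gamma_i \le 1$, one has $\prod_i a_i^{\gamma_i} \le \sum_i \gamma_i a_i + (1 - \sum_i \gamma_i)$, or more simply, when $\sum_i \gamma_i = 1$, $\prod_i a_i^{\gamma_i} \le \sum_i \gamma_i a_i$. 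The subtlety introduced by allowing $\sum_i \beta_i \le 1$ rather than $=1$ is handled by a standard trick: I would introduce a dummy coordinate $t_{k+1} \equiv 1$ with weight $\beta_{k+1} = 1 - \sum_{i=1}^k \beta_i \ge 0$, so that $g(\mathbf{t}) = \prod_{i=1}^{k+1} t_i^{\beta_i}$ with the augmented weights summing to exactly $1$; since the added coordinate is a constant, concavity of the augmented function on $\mathbb{R}^{k+1}_+$ restricted to the slice $t_{k+1}=1$ is exactly concavity of $g$. Thus it suffices to treat the case $\sum_i \beta_i = 1$.

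For the case $\sum_i \beta_i = 1$: fix $\mathbf{t}, \mathbf{t}'$ with all coordinates strictly positive (the boundary where some coordinate is zero follows by continuity, since $g$ is continuous on the closed orthant). Write $\mathbf{u} = \lambda \mathbf{t} + (1-\lambda)\mathbf{t}'$. Then
\begin{equation*}
\frac{\lambda g(\mathbf{t}) + (1-\lambda) g(\mathbf{t}')}{g(\mathbf{u})}
= \lambda \prod_{i=1}^k \left(\frac{t_i}{u_i}\right)^{\beta_i} + (1-\lambda) \prod_{i=1}^k \left(\frac{t_i'}{u_i}\right)^{\beta_i}.
\end{equation*}
Applying the weighted AM–GM inequality to each of the two products, $\prod_i (t_i/u_i)^{\beta_i} \le \sum_i \beta_i (t_i/u_i)$ and similarly for $\mathbf{t}'$, the right-hand side is bounded above by
\begin{equation*}
\lambda \sum_{i=1}^k \beta_i \frac{t_i}{u_i} + (1-\lambda) \sum_{i=1}^k \beta_i \frac{t_i'}{u_i}
= \sum_{i=1}^k \frac{\beta_i}{u_i}\bigl(\lambda t_i + (1-\lambda) t_i'\bigr) = \sum_{i=1}^k \frac{\beta_i}{u_i} u_i = \sum_{i=1}^k \beta_i = 1,
\end{equation*}
which gives $\lambda g(\mathbf{t}) + (1-\lambda) g(\mathbf{t}') \le g(\mathbf{u})$, as desired. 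An alternative, if one prefers a calculus route, is to compute the Hessian of $\log g$ and of $g$ directly and check negative semidefiniteness, but the AM–GM argument avoids any second-derivative bookkeeping and handles the boundary cleanly by continuity.

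The main obstacle is not any single hard estimate but rather getting the bookkeeping right: correctly reducing the $\sum \beta_i \le 1$ case to the $\sum \beta_i = 1$ case via the dummy coordinate, and handling the boundary of $\mathbb{R}^k_+$ (where AM–GM still holds but one should be slightly careful that $0^0$ is read as $1$ and that degenerate $u_i = 0$ cases are excluded or handled by continuity). I would also state explicitly the weighted AM–GM inequality I am invoking (it follows from concavity of $\log$, i.e. Jensen's inequality), so that the argument is self-contained. Everything else is routine.
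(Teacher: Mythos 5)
Your proof is correct, but it takes a genuinely different route from the paper. The paper argues via the second-order condition: it computes the Hessian $\nabla^2 g(\mathbf{t}) = (qq^{\mathrm{T}} - \mathbf{diag}(\beta_1/t_1^2,\ldots,\beta_k/t_k^2))\,g(\mathbf{t})$ with $q_i = \beta_i/t_i$, and shows $u^{\mathrm{T}}\nabla^2 g(\mathbf{t})u \le 0$ by reducing to $\left(\sum_i \beta_i u_i/t_i\right)^2 \le \sum_i \beta_i u_i^2/t_i^2$, which it gets from convexity of the square function (Jensen). You instead verify the defining inequality $g(\lambda\mathbf{t}+(1-\lambda)\mathbf{t}')\ge \lambda g(\mathbf{t})+(1-\lambda)g(\mathbf{t}')$ directly, dividing by $g(\mathbf{u})$ and applying weighted AM--GM to each ratio product. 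Amusingly, both proofs handle the slack $1-\sum_i\beta_i$ the same way in spirit — the paper pads the Jensen step with a dummy index $(\beta_0, u_0=0, t_0>0)$, you pad with a dummy coordinate $t_{k+1}\equiv 1$. What your route buys: it needs no differentiability, so it covers the boundary of the orthant cleanly by continuity (the Hessian argument is only literally valid on the open orthant, a point the paper glosses over), and it avoids the second-derivative bookkeeping. What the paper's route buys: it is the standard convex-analysis template (negative semidefinite Hessian) and slots directly into the toolbox the authors cite elsewhere; it also makes the quadratic-form structure explicit, which some readers find more mechanical to check. Both ultimately rest on Jensen's inequality applied to a one-dimensional convex function ($x^2$ for the paper, $-\log$ for you), so neither is deeper than the other.
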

\begin{proof}
We prove by showing that its Hessian matrix $\nabla^2 g({\bf t})$ is negative semidefinite.
$\nabla^2 g({\bf t})$ is given by
$$\frac{\partial^2 g({\bf t})}{\partial t_i^2} = \frac{\beta_i(\beta_i-1)g({\bf t})}{t_i^2},
\qquad \frac{\partial^2 g({\bf t})}{\partial t_i \partial t_j} = \frac{\beta_i \beta_j g({\bf t})}{t_i t_j} ,$$
and can be expressed as
$$\nabla^2 g({\bf t}) = (qq^\mathrm{ T } - \textbf{diag}(\beta_1/t_1^2, \ldots, \beta_n/t_n^2))g({\bf t})$$
where $q=[q_1,\ldots,q_k]$ and $q_i = \beta_i/t_i$.
We must show that $\nabla^2 g({\bf t}) \preceq 0$, i.e., that
$$u^\mathrm{ T } \nabla^2 g({\bf t}) u =
\left( \left( \sum_{i=1}^k  \beta_i u_i/t_i \right)^2 - \sum_{i=1}^k \beta_i u_i^2/t_i^2 \right) g({\bf t}) \le 0$$
for all $u \in \mathbb{R}^k$.
We let $\beta_0 = 1 - \sum_{i=1}^k \beta_i, u_0 =0$ and $t_0$ be any positive number.
Because $g({\bf t}) \ge 0$ for all ${\bf t}$, we only need to prove
$$ \left( \sum_{i=0}^k  \beta_i u_i/t_i \right)^2 - \sum_{i=0}^k \beta_i u_i^2/t_i^2  \le 0.$$
This follows from the fact that the square function is convex, and $\sum_{i=0}^k  \beta_i u_i/t_i$ is a convex combination of $u_i/t_i$.
\end{proof}

Now, we can prove Theorem \ref{thm:exponential}.

\noindent{\bf Proof of Theorem \ref{thm:exponential}.}
First, we rewrite the left hand side as
$$
\mathbf{E}\exp\left(\sum_{i}{w_i \xi({\bf z}_i)}\right)
= \mathbf{E}_{x_1^{(1)},\ldots,x_k^{(n_k)}} \prod_{i}  \mathbf{E}_{y_i|{\bf x}_i}  \exp\left({w_i \xi({\bf z}_i)}\right)
$$
since these exponentials are independent given $x_1^{(1)},\ldots,x_k^{(n_k)}$.
Because $0 \le w_i\le 1$ for all $i$,
$$
\mathbf{E}\exp\left(\sum_{i}{w_i \xi({\bf z}_i)}\right)
\le \mathbf{E}_{x_1^{(1)},\ldots,x_k^{(n_k)}} \prod_{i} \left[\mathbf{E}_{y_i|{\bf x}_i} \exp\left({\xi({\bf z}_i)}\right)\right]^{w_i}.
$$
We calculate the expectation iteratively.
For $x_1^{(1)}$, we define $A = \{i | v_1^{(1)} \in e_i \}$ and $B = \{i | v_1^{(1)} \notin e_i \}$,
$$\mathbf{E}\exp\left(\sum_{i}{w_i \xi({\bf z}_i)}\right)\le \\
\mathbf{E}_{x_1^{(2)},\ldots,x_k^{(n_k)}} \left\{ \prod_{i\in B} \left[\mathbf{E}_{y_i|{\bf x}_i} \exp\left({\xi({\bf z}_i)}\right)\right]^{w_i}
\mathbf{E}_{x_1^{(1)}} \prod_{i\in A} \left[\mathbf{E}_{y_i|{\bf x}_i} \exp\left({\xi({\bf z}_i)}\right)\right]^{w_i}\right\}.
$$
By the definition of the feasible weighting, $\sum_{i\in A}\ w_i \le 1$.
According to Lemma \ref{lem:concave}, $\prod_{i\in A} \left[\mathbf{E}_{y_i|{\bf x}_i} \exp\left({\xi({\bf z}_i)}\right)\right]^{w_i}$
is a concave function given $x_1^{(2)},\ldots,x_k^{(n_k)}$.
Therefore,
$$\mathbf{E}\exp\left(\sum_{i}{w_i \xi({\bf z}_i)}\right)
\le \mathbf{E}_{x_1^{(2)},\ldots,x_k^{(n_k)}} \left\{ \prod_{i\in B} \left[\mathbf{E}_{y_i|{\bf x}_i} \exp\left({\xi({\bf z}_i)}\right)\right]^{w_i}
 \prod_{i\in A} \left[\mathbf{E}_{x_1^{(1)}}\mathbf{E}_{y_i|{\bf x}_i} \exp\left({\xi({\bf z}_i)}\right)\right]^{w_i}\right\}$$
which follows the Jensen's inequality.
Next, we repeat the steps above for $x_1^{(2)},\ldots,x_k^{(n_k)}$.
We can get
$$
\mathbf{E}\exp\left(\sum_{i}{w_i \xi({\bf z}_i)}\right) \le \prod_{i} \left(\mathbf{E} e^{\xi({\bf z})}\right)^{w_i}.
$$

Using Theorem \ref{thm:exponential}, we are able to obtain important inequalities used for estimating the sample error.
\fi
The following inequality is an analogue to the Bernstein inequality.
The inequality will be used later to estimate the sample error for a networked sample.

\begin{theorem}\label{thm:three.avg}
Let $\mathbf{Z}$ be a $G$-networked sample and $\xi$ be a function defined on the space ${\cal Z}$ with mean $\mathbf{E}(\xi) = \mu$,
variance $\sigma^2(\xi) = \sigma^2$, and satisfying $|\xi(\mathbf{z})-\mu|\le M$ for almost all $\mathbf{z} \in {\cal Z}$. 
If $\bf w$ is an optimal weighting of $G$, then for all $\epsilon > 0$,
\begin{align*}
& \Pr\left(\frac{1}{\mathsf{s}}\sum_{i} w_i \xi(\mathbf{z}_i) - \mu \ge \epsilon \right)
\le \exp \left( -\frac{\mathsf{s}\epsilon^2}{2(\sigma^2+\frac{1}{3}M\epsilon)} \right). \\
\end{align*}
\end{theorem}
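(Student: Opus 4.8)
The plan is to run the classical Chernoff--Bernstein argument, replacing the step that factorizes the exponential moment of a sum of independent variables (which fails here) by the weighted exponential inequality of Theorem~\ref{thm:exponential}. First I would center the function: put $\tilde\xi = \xi - \mu$, so that $\mathbf{E}\tilde\xi = 0$, $\sigma^2(\tilde\xi)=\sigma^2$ and $|\tilde\xi(\mathbf{z})|\le M$ almost everywhere. Since $\mathbf{w}$ is an optimal (hence feasible) weighting, $\sum_i w_i = \mathsf{s}$, so the event $\{\tfrac{1}{\mathsf{s}}\sum_i w_i\xi(\mathbf{z}_i)-\mu\ge\epsilon\}$ is exactly $\{\sum_i w_i\tilde\xi(\mathbf{z}_i)\ge \mathsf{s}\epsilon\}$. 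For any $\lambda>0$, Markov's inequality applied to $\exp\!\big(\lambda\sum_i w_i\tilde\xi(\mathbf{z}_i)\big)$ gives
$$\Pr\Big(\textstyle\sum_i w_i\tilde\xi(\mathbf{z}_i)\ge \mathsf{s}\epsilon\Big)\le e^{-\lambda \mathsf{s}\epsilon}\,\mathbf{E}\exp\Big(\textstyle\sum_i w_i\,(\lambda\tilde\xi)(\mathbf{z}_i)\Big).$$

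Next I would apply Theorem~\ref{thm:exponential} with the function $\lambda\tilde\xi$ in place of $\xi$ and the feasible weighting $\mathbf{w}$, obtaining $\mathbf{E}\exp\big(\sum_i w_i(\lambda\tilde\xi)(\mathbf{z}_i)\big)\le\prod_i\big(\mathbf{E} e^{\lambda\tilde\xi(\mathbf{z})}\big)^{w_i}=\big(\mathbf{E} e^{\lambda\tilde\xi(\mathbf{z})}\big)^{\mathsf{s}}$, again using $\sum_i w_i=\mathsf{s}$. It then remains to bound the single-variable moment generating function $\mathbf{E} e^{\lambda\tilde\xi}$ by the textbook power-series estimate: expanding the exponential, using $\mathbf{E}\tilde\xi=0$, the bound $|\mathbf{E}\tilde\xi^k|\le\sigma^2 M^{k-2}$ for $k\ge 2$, and $k!\ge 2\cdot 3^{k-2}$, one gets $\mathbf{E} e^{\lambda\tilde\xi}\le\exp\!\big(\tfrac{\lambda^2\sigma^2/2}{1-\lambda M/3}\big)$ whenever $\lambda M<3$. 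Substituting back yields $\Pr(\cdots)\le\exp\!\big(-\lambda\mathsf{s}\epsilon+\tfrac{\mathsf{s}\lambda^2\sigma^2/2}{1-\lambda M/3}\big)$.

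Finally I would optimize over $\lambda$: the choice $\lambda=\epsilon/(\sigma^2+M\epsilon/3)$ satisfies $\lambda M<3$ automatically and makes the exponent collapse to $-\mathsf{s}\epsilon^2/\big(2(\sigma^2+M\epsilon/3)\big)$, which is precisely the claimed bound. I do not expect a genuine obstacle here: the only place where the dependence between networked examples could break the argument is the factorization of the exponential moment, and that is exactly what Theorem~\ref{thm:exponential} provides; everything else is the standard Bernstein computation. The points that require a little care are carrying the factor $\mathsf{s}=\sum_i w_i$ correctly through the Chernoff step, checking that the optimal $\lambda$ lies in the admissible range, and noting that an optimal weighting is in particular feasible so that Theorem~\ref{thm:exponential} applies.
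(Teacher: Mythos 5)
Your proposal is correct and follows essentially the same route as the paper: a Chernoff--Markov argument in which the factorization of the weighted exponential moment is supplied by Theorem~\ref{thm:exponential}, exactly as in the paper's Lemma~\ref{lem:bennett}, with the reduction $\sum_i w_i=\mathsf{s}$ handled the same way. The only difference is cosmetic: the paper optimizes the Chernoff parameter exactly to obtain a Bennett-type bound and then converts it to Bernstein form via $h(a)\ge \frac{3a^2}{6+2a}$ (Lemma~\ref{lem:three.sum}), whereas you bound the moment generating function directly using $k!\ge 2\cdot 3^{k-2}$ and choose $\lambda=\epsilon/(\sigma^2+M\epsilon/3)$; both computations yield the identical exponent.
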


\ifx\arxiv\undefined
\else
To prove Theorem \ref{thm:three.avg}, we first give necessary lemmas.
The main ideas were borrowed from \cite{cucker07}.

\begin{lemma}\label{lem:bennett}
Let $\mathbf{Z}$ be a $G$-networked sample and $\xi$ be a function defined on the space ${\cal Z}$ with mean $\mathbf{E}(\xi) = \mu$,
variance $\sigma^2(\xi) = \sigma^2$, and satisfying $|\xi(\mathbf{z})-\mu|\le M$ for almost all $\mathbf{z} \in {\cal Z}$. 
If $\bf w$ is an optimal weighting of $G$, then for all $\epsilon > 0$,
$$\Pr\left(\sum_i w_i (\xi({\mathbf{z}}_i)-{\mu}) \ge \epsilon \right) \le
\exp \left( -\frac{\mathsf{s}\sigma^2}{M^2} h\left(\frac{M\epsilon}{\mathsf{s}\sigma^2}\right) \right)$$
where $h$ is given by $h(a) = (1+a)\log(1+a)-a$.
\end{lemma}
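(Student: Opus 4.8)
The plan is to run the classical Chernoff--Bennett argument, invoking Theorem~\ref{thm:exponential} at exactly the one point where the i.i.d.\ proof uses independence. Without loss of generality assume $\mu=0$ (otherwise replace $\xi$ by $\xi-\mu$, which is again a function on ${\cal Z}$ with mean $0$, variance $\sigma^2$, and bounded by $M$ almost everywhere). Fix $t>0$. Applying Markov's inequality to $\exp\bigl(t\sum_i w_i\xi(\mathbf{z}_i)\bigr)$ gives
$$\Pr\left(\sum_i w_i\xi(\mathbf{z}_i)\ge\epsilon\right)\le e^{-t\epsilon}\,\mathbf{E}\exp\left(\sum_i w_i\,(t\xi)(\mathbf{z}_i)\right).$$
Since $\mathbf{w}$ is a feasible weighting and $t\xi$ is just a function on ${\cal Z}$, Theorem~\ref{thm:exponential} yields $\mathbf{E}\exp\bigl(\sum_i w_i(t\xi)(\mathbf{z}_i)\bigr)\le\prod_i\bigl(\mathbf{E}e^{t\xi(\mathbf{z})}\bigr)^{w_i}=\bigl(\mathbf{E}e^{t\xi(\mathbf{z})}\bigr)^{\mathsf{s}}$, where the last equality uses $\sum_i w_i=\mathsf{s}$ because $\mathbf{w}$ is an \emph{optimal} weighting.

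Next I would bound the one-dimensional moment generating function $\mathbf{E}e^{t\xi}$ in the standard way for a centered, $M$-bounded variable: expand $e^{t\xi}$ as a power series, use $\mathbf{E}\xi=0$ together with $|\mathbf{E}\xi^k|\le M^{k-2}\sigma^2$ for $k\ge2$, and then $1+x\le e^x$, to obtain
$$\mathbf{E}e^{t\xi}\le\exp\left(\frac{\sigma^2}{M^2}\bigl(e^{tM}-1-tM\bigr)\right).$$
Combining the three displays, $\Pr\bigl(\sum_i w_i\xi(\mathbf{z}_i)\ge\epsilon\bigr)\le\exp\bigl(-t\epsilon+\tfrac{\mathsf{s}\sigma^2}{M^2}(e^{tM}-1-tM)\bigr)$ for every $t>0$.

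It remains to optimise over $t$. Setting the derivative of the exponent to zero gives $t=\tfrac{1}{M}\log\bigl(1+\tfrac{M\epsilon}{\mathsf{s}\sigma^2}\bigr)$; substituting $a=M\epsilon/(\mathsf{s}\sigma^2)$ and simplifying collapses the exponent to $-\tfrac{\mathsf{s}\sigma^2}{M^2}\bigl((1+a)\log(1+a)-a\bigr)=-\tfrac{\mathsf{s}\sigma^2}{M^2}h(a)$, which is exactly the claimed bound. There is no real obstacle here: the only place the networked (non-i.i.d.) structure intervenes is the replacement of the product-of-MGFs identity by Theorem~\ref{thm:exponential}, and the passage from the product $\prod_i(\cdot)^{w_i}$ to $(\cdot)^{\mathsf{s}}$ via $\sum_i w_i=\mathsf{s}$; everything else is the classical Bennett computation (cf.\ \cite{cucker07}), so the hard work was already done in Theorem~\ref{thm:exponential}. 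The one mild point to verify explicitly is that an optimal weighting is feasible and has weight-sum $\mathsf{s}$, so that Theorem~\ref{thm:exponential} genuinely applies --- both are immediate from the definitions in Section~\ref{sec:weighting}.
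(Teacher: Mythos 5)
Your proof is correct and follows essentially the same route as the paper's: Markov's inequality applied to the exponentiated weighted sum, Theorem~\ref{thm:exponential} in place of the product-of-MGFs identity (with $\sum_i w_i=\mathsf{s}$ giving the exponent $\mathsf{s}$), the standard Taylor-series bound on $\mathbf{E}e^{t\xi}$ for a centered $M$-bounded variable, and optimisation at $t=\frac{1}{M}\log\bigl(1+\frac{M\epsilon}{\mathsf{s}\sigma^2}\bigr)$. The only cosmetic difference is that you make explicit the substitution $t\xi$ into Theorem~\ref{thm:exponential} and the identity $\prod_i(\cdot)^{w_i}=(\cdot)^{\mathsf{s}}$, which the paper leaves implicit.
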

\begin{proof}
Without loss of generality, we assume ${\mu} = 0$.
Let $c$ be an arbitrary positive constant which will be determined later.
Then
$$I := \Pr\left(\sum_{i=1}^N w_i \xi({\mathbf{z}}_i) \ge \epsilon \right) = \Pr\left(\exp\left(c\sum_{i=1}^N w_i \xi({\mathbf{z}}_i)\right) \ge e^{c\epsilon} \right).$$
By Markov's inequality and Theorem \ref{thm:exponential}, we have
$$I \le e^{-c\epsilon}\mathbf{E}\left(\exp\left(c\sum_{i=1}^N w_i \xi({\mathbf{z}}_i)\right)\right) \le  e^{-c\epsilon} \prod_i \left
(\mathbf{E}e^{c \xi({\mathbf{z}}_i)}\right)^{w_i}.$$
Since $|\xi({\mathbf{z}}_i)| \le M$ almost everywhere and ${\mu} = 0$, we have
$$\mathbf{E}e^{c \xi({\mathbf{z}}_i)} = 1 + \sum_{p=2}^{+\infty}\frac{c^p\mathbf{E}\xi^{p}({\mathbf{z}}_i)}{p!}\le 1+ \sum_{p=2}^{+\infty}\frac{c^pM^{p-2}\sigma^2}{p!}$$
from the Taylor expansion for exponential functions.
Using $1+a\le e^a$, it follows that
$$\mathbf{E}e^{c \xi({\mathbf{z}}_i)}  \le \exp\left( \sum_{p=2}^{+\infty}\frac{c^pM^{p-2}\sigma^2}{p!} \right) = \exp\left(\frac{e^{cM}-1-cM}{M^2}\sigma^2\right)$$
and therefore
$$I \le \exp\left(-c\epsilon + \frac{e^{cM}-1-cM}{M^2}\mathsf{s}\sigma^2 \right).$$

Now choose the constant $c$ to be the minimizer of the bound on the right hand side above:
$$ c= \frac{1}{M}\log(1+\frac{M\epsilon}{\mathsf{s}\sigma^2}).$$
That is, $e^{cM}-1=\frac{M\epsilon}{\mathsf{s}\sigma^2}$. With this choice,
$$I\le \exp \left( -\frac{\mathsf{s}\sigma^2}{M^2} h\left(\frac{M\epsilon}{\mathsf{s}\sigma^2}\right) \right).$$
This proves the desired inequality.

\end{proof}

\begin{lemma}\label{lem:three.sum}
Let $\mathbf{Z}$ be a $G$-networked sample and $\xi$ be a function defined on the space ${\cal Z}$ with mean $\mathbf{E}(\xi) = \mu$,
variance $\sigma^2(\xi) = \sigma^2$, and satisfying $|\xi(\mathbf{z})-\mu|\le M$ for almost all $\mathbf{z} \in {\cal Z}$. 
If $\bf w$ is an optimal weighting of $G$, then for all $\epsilon > 0$,
\begin{align*}
&\Pr\left(\sum_{i=1}^N w_i (\xi({\mathbf{z}}_i) - {\mu}) \ge \epsilon \right)
\le \exp \left( -\frac{\epsilon^2}{2(\sigma^2+\frac{1}{3}M\epsilon)} \right).\\
\end{align*}
\end{lemma}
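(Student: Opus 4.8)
The plan is to deduce Lemma~\ref{lem:three.sum} from the Bennett-type tail bound of Lemma~\ref{lem:bennett} by exactly the elementary step that converts Bennett's inequality into Bernstein's in the classical i.i.d.\ case. Lemma~\ref{lem:bennett} already supplies
$$\Pr\left(\sum_i w_i\bigl(\xi(\mathbf{z}_i)-\mu\bigr)\ge\epsilon\right)\le\exp\left(-\frac{\mathsf{s}\sigma^2}{M^2}\,h\!\left(\frac{M\epsilon}{\mathsf{s}\sigma^2}\right)\right),\qquad h(a)=(1+a)\log(1+a)-a ,$$
so everything reduces to bounding $h$ below by a suitable rational function. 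Concretely I would use the scalar estimate $h(a)\ge a^2/(2+\tfrac23 a)$, valid for all $a\ge 0$, substitute $a=M\epsilon/(\mathsf{s}\sigma^2)$, and simplify; the exponent then becomes at least $\epsilon^2/\bigl(2(\mathsf{s}\sigma^2+\tfrac13 M\epsilon)\bigr)$, which yields the asserted bound. (The variance term carries a factor $\mathsf{s}$ here; this is precisely the form that makes the averaged statement, Theorem~\ref{thm:three.avg}, come out with its stated constant after the rescaling $\epsilon\mapsto\mathsf{s}\epsilon$ together with the identity $\sum_i w_i=\mathsf{s}$ for an optimal weighting.)

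The only genuinely new computation, and the one mildly delicate point, is the scalar inequality $h(a)\ge a^2/(2+\tfrac23 a)$ on $[0,\infty)$. I would prove it by clearing the denominator: put $F(a)=(2+\tfrac23 a)h(a)-a^2$ and check that $F(0)=F'(0)=F''(0)=0$ while $F'''(a)=\tfrac43\,a/(1+a)^2\ge 0$ for $a\ge 0$; integrating three times from $0$ then gives $F\ge 0$, hence the claim. An alternative is to write $h(a)=\int_0^a\log(1+t)\,dt$ and compare integrands term by term, but the three-times-differentiate route is shortest and entirely mechanical.

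It remains to dispose of the degenerate case $\sigma^2=0$: there $\xi=\mu$ almost everywhere, so the weighted deviation $\sum_i w_i(\xi(\mathbf{z}_i)-\mu)$ vanishes almost surely and the left-hand side is $0<\exp(-3\epsilon/(2M))$, so the inequality is trivial (one may also just let $\sigma^2\to 0^+$ in the bound). I would also note explicitly that invoking Lemma~\ref{lem:bennett} is legitimate because $\mathbf{w}$ is an optimal, hence feasible, weighting — exactly the hypothesis inherited through Theorem~\ref{thm:exponential}. With the scalar inequality in hand, the proof of Lemma~\ref{lem:three.sum} is then a one-line substitution into Lemma~\ref{lem:bennett}.
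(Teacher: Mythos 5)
Your proof is correct and follows exactly the paper's route: the paper's own proof of Lemma~\ref{lem:three.sum} is the one-line deduction from Lemma~\ref{lem:bennett} via the scalar inequality $h(a)\ge \frac{3a^2}{6+2a}=\frac{a^2}{2+\frac{2}{3}a}$, which the paper states without proof and which you verify correctly by the third-derivative argument. You are also right that the substitution actually yields $\exp\bigl(-\epsilon^2/(2(\mathsf{s}\sigma^2+\frac{1}{3}M\epsilon))\bigr)$, i.e.\ the variance term must carry the factor $\mathsf{s}$; the lemma as printed omits it (evidently a typo), and it is your corrected form --- not the literal statement --- that rescales under $\epsilon\mapsto\mathsf{s}\epsilon$ to give Theorem~\ref{thm:three.avg}.
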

\begin{proof}
The inequality follows from Lemma \ref{lem:bennett} and the inequality
$$h(a) \ge \frac{3a^2}{6+2a},\;\forall a\ge0. $$
%
\end{proof}
\fi

\subsection{An ERM approach with ${\bf Z}_{\mathsf{s}}$}\label{subsec:ermapproach}
In this section, we consider the ERM approach associated with ${\bf Z}_{\mathsf{s}}$. 
As discussed in section \ref{subsec:erm1}, 
the ERM approach aims to find a minimizer of the empirical risk in a proper hypothesis space $\mathcal{H}$ to approximate the target function, i.e.,
\begin{equation*}\label{def:fz}
f_{{\bf Z}_\mathsf{s}}=\arg\min_{f\in\mathcal{H}} \mathcal{E}_{\mathsf{s}}(f).
\end{equation*}
Recall the empirical risk with ${\bf Z}_{\mathsf{s}}$ takes the form 
$\mathcal{E}_{\mathsf{s}}(f)=\frac{1}{\mathsf{s}}\sum_{i=1}^n w_i(f({\bf x}_i)-y_i)^2,$
and the expected risk $\mathcal{E}(f)=\int (f({\bf x})-y)^2 \rho({\bf x},y) \hbox{d}({\bf x},y).$
As mentioned in section \ref{subsec:erm1}, the target function is the minimizer of the expected risk $\mathcal{E}(f)$,
one can easily see that the target function takes the form \cite{cucker07}
$$f_\rho(\mathbf{x})=\int y \rho_{y|\mathbf{x}}({\bf x}, y) \hbox{d}(\mathbf{x}, y).$$
Then the performance of the ERM approach is measured by the excess risk
$$\mathcal{E}(f_{{\bf Z}_\mathsf{s}})-\mathcal{E}(f_\rho).$$
Recall the definition $\label{def:fh}f_{\mathcal{H}}=\arg\min_{f\in\mathcal{H}} \mathcal{E}(f)$,
the excess risk can be divided into two parts (sample error and approximation error) as follows
$$\mathcal{E}(f_{{\bf Z}_\mathsf{s}})-\mathcal{E}(f_\rho)=[\mathcal{E}(f_{{\bf Z}_\mathsf{s}})-\mathcal{E}(f_{\mathcal{H}})]+[\mathcal{E}(f_{\mathcal{H}})-\mathcal{E}(f_\rho)].$$
Notice that the approximation error $\mathcal{E}(f_{\mathcal{H}})-\mathcal{E}(f_\rho)$ is independent of the sample ${\bf Z}_{\mathsf{s}}$,
and the approximation error vanishes if $f_\rho\in\mathcal{H}$.

In this section, we focus on the sample error
$\mathcal{E}_\mathcal{H}(f_{{\bf Z}_{\mathsf{s}}}):=\mathcal{E}(f_{{\bf Z}_{\mathsf{s}}})-\mathcal{E}(f_\mathcal{H})$.
\ifx\arxiv\undefined
\else
To this end, we use the inequalities with $\mathsf{s}$-value in section \ref{subsec:inequalities}
to estimate the sample error $\mathcal{E}_\mathcal{H}(f_{{\bf Z}_{\mathsf{s}}}).$

We assemble some lemmas,
which will be used to establish the sample error bounds for the ERM algorithm associated with the networked training sample.

Denote the defect function $\mathcal{D}_{{Z}_{\mathsf{s}}}(f)=\mathcal{E}(f)-\mathcal{E}_{{Z}_{\mathsf{s}}}(f).$
Then the following lemma follows directly from the third inequality in Theorem \ref{thm:three.avg} by taking $\xi=-(f(x)-y)^2$
satisfying $|\xi|\le M^2$ when $f$ is M-bounded.
\begin{lemma}\label{lem:single}
Let $M > 0$ and $f : \mathcal{X} \mapsto \mathcal{Y}$ be $M$-bounded. Then for all $\epsilon > 0$,
$$ \Pr\left( \mathcal{D}_{{Z}_{\mathsf{s}}}(f) \ge -\epsilon \right) \ge 1-\exp\left( \frac{\mathsf{s} \epsilon^2}{2M^4} \right). $$
\end{lemma}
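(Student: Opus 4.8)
The plan is to obtain this lemma as an immediate corollary of Theorem~\ref{thm:three.avg}, applied to a single fixed function. First I would introduce $\xi(\mathbf{z}) = (f(\mathbf{x})-y)^2$. Since $f$ is $M$-bounded, $0 \le \xi(\mathbf{z}) \le M^2$ for almost all $\mathbf{z}\in\mathcal{Z}$, so $\xi$ has mean $\mu = \mathbf{E}(\xi) = \mathcal{E}(f)$, satisfies $|\xi(\mathbf{z}) - \mu| \le M^2$ almost everywhere, and has variance $\sigma^2(\xi) \le \mathbf{E}(\xi^2) \le M^4$. With this choice $\frac{1}{\mathsf{s}}\sum_i w_i\xi(\mathbf{z}_i) = \mathcal{E}_{\mathsf{s}}(f)$, so by the definition of the defect function $\frac{1}{\mathsf{s}}\sum_i w_i\xi(\mathbf{z}_i) - \mu = \mathcal{E}_{\mathsf{s}}(f) - \mathcal{E}(f) = -\mathcal{D}_{Z_{\mathsf{s}}}(f)$.

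Next I would rewrite the event of interest as a tail of this weighted average: $\{\mathcal{D}_{Z_{\mathsf{s}}}(f) \ge -\epsilon\}$ is the complement of $\{-\mathcal{D}_{Z_{\mathsf{s}}}(f) > \epsilon\}$, which is contained in $\{\frac{1}{\mathsf{s}}\sum_i w_i\xi(\mathbf{z}_i) - \mu \ge \epsilon\}$. Since $\mathbf{w}$ is an optimal weighting of $G$, Theorem~\ref{thm:three.avg} applies to $\xi$ and bounds the probability of this last event by $\exp\!\left(-\frac{\mathsf{s}\epsilon^2}{2(\sigma^2 + \frac13 M^2\epsilon)}\right)$. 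Inserting the variance estimate $\sigma^2 \le M^4$ and simplifying the denominator then gives $\Pr(\mathcal{D}_{Z_{\mathsf{s}}}(f) \ge -\epsilon) \ge 1 - \exp\!\left(-\frac{\mathsf{s}\epsilon^2}{2M^4}\right)$, which is the asserted lower bound.

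There is essentially no obstacle here, since the statement is a one-step consequence of the already-established weighted Bernstein-type inequality; the only points demanding a little care are bookkeeping ones. One must verify that the defect $\mathcal{D}_{Z_{\mathsf{s}}}(f) = \mathcal{E}(f) - \mathcal{E}_{\mathsf{s}}(f)$ corresponds to the \emph{lower} tail of the weighted average of $(f(\mathbf{x})-y)^2$, so that Theorem~\ref{thm:three.avg} is invoked in the correct direction, and one must handle the variance bound together with the $\frac13 M^2\epsilon$ term in order to collapse the denominator to the clean constant $2M^4$.
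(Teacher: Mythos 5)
Your overall strategy is the intended one --- apply the weighted concentration inequality to $\xi(\mathbf{z})=(f(\mathbf{x})-y)^2$, note $|\xi-\mu|\le M^2$, and identify $\frac{1}{\mathsf{s}}\sum_i w_i\xi(\mathbf{z}_i)-\mu$ with $-\mathcal{D}_{Z_{\mathsf{s}}}(f)$ --- and your sign bookkeeping is in fact more careful than the paper's (which says to take $\xi=-(f(x)-y)^2$, the choice appropriate for the \emph{other} tail used in Lemma~\ref{lem:familyclass1}; also note the exponent in the lemma statement has an obvious sign typo and should read $\exp\left(-\frac{\mathsf{s}\epsilon^2}{2M^4}\right)$). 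However, your final step does not go through. Applying the displayed Bernstein-type bound of Theorem~\ref{thm:three.avg} with $\sigma^2\le M^4$ gives $\exp\left(-\frac{\mathsf{s}\epsilon^2}{2(M^4+\frac{1}{3}M^2\epsilon)}\right)$, and since $2(M^4+\frac{1}{3}M^2\epsilon)>2M^4$ for every $\epsilon>0$, this quantity is \emph{strictly larger} than the claimed $\exp\left(-\frac{\mathsf{s}\epsilon^2}{2M^4}\right)$. There is no ``simplifying the denominator'' that absorbs the $\frac{1}{3}M^2\epsilon$ term into $M^4$ under your variance estimate, so the bound you derive is weaker than the one asserted.

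The paper avoids this by citing ``the third inequality'' of Theorem~\ref{thm:three.avg}: in the full version that theorem carries three inequalities (Bennett, Bernstein, and a Hoeffding-type bound $\exp\left(-\frac{\mathsf{s}\epsilon^2}{2B^2}\right)$ for $|\xi-\mu|\le B$ with no variance term), of which only the Bernstein form survives in the text you see; with $B=M^2$ the Hoeffding form yields the constant $2M^4$ immediately. If you want to salvage your route through the Bernstein form, you need two extra observations: first, a random variable supported in $[0,M^2]$ has $\sigma^2\le M^4/4$ (Popoviciu), not merely $\sigma^2\le M^4$; second, the claim is trivial for $\epsilon>M^2$ because $\mathcal{D}_{Z_{\mathsf{s}}}(f)=\mathcal{E}(f)-\mathcal{E}_{\mathsf{s}}(f)\ge -M^2$ almost surely. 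For $\epsilon\le M^2$ one then has $\sigma^2+\frac{1}{3}M^2\epsilon\le \frac{1}{4}M^4+\frac{1}{3}M^4<M^4$, and the desired inequality follows. As written, your proof is missing exactly this step.
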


\begin{lemma}\label{lem:familyclass1}
Let $\mathcal{H}$ be a compact M-bounded subset of $\mathcal{C}(\mathcal{X}).$ Then, for all $\epsilon>0,$
$${\Pr}\Big(\sup_{f\in\mathcal{H}}\mathcal{D}_{{Z}_{\mathsf{s}}}(f)\leq \epsilon\Big)\ge 1-\mathcal{N}\Big(\mathcal{H},\frac{\epsilon}{8M}\Big) \exp\Big(-\frac{\mathsf{s}\epsilon^2}{8M^4}\Big).$$
\end{lemma}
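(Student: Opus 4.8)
The plan is a standard covering-number plus union-bound argument of the type used in \cite{cucker07}, now powered by the weighted concentration estimate of Lemma~\ref{lem:single} in place of the classical i.i.d.\ Bernstein inequality. The idea: control the defect $\mathcal{D}_{{Z}_{\mathsf{s}}}$ at each point of a finite $\epsilon/(8M)$-net of $\mathcal{H}$, transfer this control to all of $\mathcal{H}$ via a Lipschitz property of $f\mapsto\mathcal{D}_{{Z}_{\mathsf{s}}}(f)$ in the $\|\cdot\|_\infty$-norm, and pay a factor $\mathcal{N}(\mathcal{H},\epsilon/(8M))$ in the union bound. Compactness of $\mathcal{H}$ is what makes this net finite.

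First I would establish the continuity estimate. For $M$-bounded $f_1,f_2\in\mathcal{H}$ and any $(\mathbf{x},y)$,
\[
\bigl|(f_1(\mathbf{x})-y)^2-(f_2(\mathbf{x})-y)^2\bigr|
=\bigl|f_1(\mathbf{x})-f_2(\mathbf{x})\bigr|\cdot\bigl|(f_1(\mathbf{x})-y)+(f_2(\mathbf{x})-y)\bigr|\le 2M\,\|f_1-f_2\|_\infty .
\]
Since $\rho$ is a probability measure and the weights satisfy $\sum_i w_i/\mathsf{s}=1$ (because $\mathbf{w}$ is an optimal, hence feasible, weighting with $\sum_i w_i=\mathsf{s}$), integrating, respectively averaging with weights $w_i/\mathsf{s}$, gives $|\mathcal{E}(f_1)-\mathcal{E}(f_2)|\le 2M\|f_1-f_2\|_\infty$ and $|\mathcal{E}_{\mathsf{s}}(f_1)-\mathcal{E}_{\mathsf{s}}(f_2)|\le 2M\|f_1-f_2\|_\infty$, hence $|\mathcal{D}_{{Z}_{\mathsf{s}}}(f_1)-\mathcal{D}_{{Z}_{\mathsf{s}}}(f_2)|\le 4M\|f_1-f_2\|_\infty$.

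Next, let $\ell=\mathcal{N}(\mathcal{H},\epsilon/(8M))$ and pick $f_1,\dots,f_\ell$ whose disks of radius $\epsilon/(8M)$ cover $\mathcal{H}$. If $f$ lies in the $j$-th disk, the continuity estimate yields $\mathcal{D}_{{Z}_{\mathsf{s}}}(f)\le\mathcal{D}_{{Z}_{\mathsf{s}}}(f_j)+4M\cdot\frac{\epsilon}{8M}=\mathcal{D}_{{Z}_{\mathsf{s}}}(f_j)+\epsilon/2$, so the event $\{\sup_{f\in\mathcal{H}}\mathcal{D}_{{Z}_{\mathsf{s}}}(f)>\epsilon\}$ is contained in $\bigcup_{j=1}^\ell\{\mathcal{D}_{{Z}_{\mathsf{s}}}(f_j)>\epsilon/2\}$. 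Applying Lemma~\ref{lem:single} to each $M$-bounded $f_j$ at tolerance $\epsilon/2$ and a union bound gives
\[
\Pr\Bigl(\sup_{f\in\mathcal{H}}\mathcal{D}_{{Z}_{\mathsf{s}}}(f)>\epsilon\Bigr)\le\sum_{j=1}^\ell\Pr\bigl(\mathcal{D}_{{Z}_{\mathsf{s}}}(f_j)>\epsilon/2\bigr)\le\ell\,\exp\!\Bigl(-\frac{\mathsf{s}(\epsilon/2)^2}{2M^4}\Bigr)=\mathcal{N}\Bigl(\mathcal{H},\frac{\epsilon}{8M}\Bigr)\exp\!\Bigl(-\frac{\mathsf{s}\epsilon^2}{8M^4}\Bigr),
\]
and taking complements finishes the proof.

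I do not expect a genuine obstacle here: the real difficulty was already absorbed into Theorem~\ref{thm:exponential} and Lemma~\ref{lem:single}, where the weighted exponential inequality replaces the product formula $\mathbf{E}e^{\sum\xi_i}=\prod\mathbf{E}e^{\xi_i}$ valid only in the independent case. The only points needing care are (i) verifying $\sum_i w_i/\mathsf{s}=1$ so that $\mathcal{E}_{\mathsf{s}}$ is a genuine average for the $\|\cdot\|_\infty$-Lipschitz bound, and (ii) matching constants, so that the net radius $\epsilon/(8M)$ produces tolerance $\epsilon/2$ at the centers and the exponent comes out exactly $\mathsf{s}\epsilon^2/(8M^4)$.
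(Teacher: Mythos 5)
Your proof is correct and follows essentially the same route as the paper's: an $\epsilon/(8M)$-net of the compact class, the Lipschitz bound $|\mathcal{D}_{{Z}_{\mathsf{s}}}(f)-\mathcal{D}_{{Z}_{\mathsf{s}}}(f_j)|\le 4M\|f-f_j\|_\infty$, a union bound over the net, and the single-function concentration of Lemma~\ref{lem:single}. In fact your bookkeeping (radius $\epsilon/(8M)$, tolerance $\epsilon/2$ at the centers, exponent $\mathsf{s}\epsilon^2/(8M^4)$) is internally consistent with the stated constants, whereas the paper's own write-up mixes radii $\epsilon/(4M)$ and $\epsilon/(8M)$ and thresholds $\epsilon$ versus $2\epsilon$ before landing on the same final bound.
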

\begin{proof} Let $\{f_j\}_{j=1}^\ell\subset \mathcal{H}$ with $\ell=\mathcal{N}\Big(\mathcal{H},\frac{\epsilon}{4M}\Big)$ such that $\mathcal{H}$ is covered by disks $D_j$ centered at $f_j$ with radius $\frac{\epsilon}{4M}.$ Let $U$ be a full measure set on which $\sup_{f\in\mathcal{H}}|f(x)-y|\le M$. Then for all ${Z}\in U^n$ and for all $f\in D_j$, there holds
$$|\mathcal{D}_{{Z}_{\mathsf{s}}}(f)-\mathcal{D}_{{Z}_{\mathsf{s}}}(f_j)|\leq 4M\|f-f_j\|_\infty\leq 4M \frac{\epsilon}{4M}=\epsilon.$$
Consequently,
$$\sup_{f\in D_j} \mathcal{D}_{{Z}_{\mathsf{s}}}(f)\ge 2\epsilon \Rightarrow \mathcal{D}_{{Z}_{\mathsf{s}}}(f_j)\ge \epsilon.$$
Then we conclude that, for $j=1,\cdots,\ell,$
$${\Pr}\Big\{\sup_{f\in {D_j}} \mathcal{D}_{{Z}_{\mathsf{s}}}(f)\ge 2\epsilon\Big\}\leq {\Pr}\Big\{ \mathcal{D}_{{Z}_{\mathsf{s}}}(f_j)\ge 2\epsilon\Big\}\leq \exp\Big\{-\frac{\mathsf{s}\epsilon^2}{2M^4}\Big\}$$ the last inequality follows by taking $\xi=-(f(x)-y)^2$ on $Z$. In addition, one can easily see that
$$\sup_{f\in\mathcal{H}} \mathcal{D}_{{Z}_{\mathsf{s}}}(f)\ge \epsilon \Leftrightarrow \exists j\le \ell ~s.t. ~\sup_{f\in D_j} \mathcal{D}_{{Z}_{\mathsf{s}}}(f)\ge \epsilon$$
and the fact that the probability of a union of events is bounded by the sum of the probabilities of these events. Hence
$${\Pr}\Big(\sup_{f\in\mathcal{H}}\mathcal{D}_{{Z}_{\mathsf{s}}}(f)\ge \epsilon\Big)\leq \sum_{j=1}^\ell {\Pr}\Big(\sup_{f\in D_j}\mathcal{D}_{{Z}_{\mathsf{s}}}(f)\ge \epsilon\Big) \leq \ell \exp\Big(-\frac{\mathsf{s}\epsilon^2}{8M^4}\Big).$$
This completes our proof.
\end{proof}

\begin{lemma}\label{lem:ratiosingle}
Suppose a random variable $\xi$ on $\mathcal{Z}$ satisfies $\mathbf{E}(\xi)=\mu\ge 0,$ an $|\xi-\mu|\leq B$ almost everywhere. If $\mathbf{E}(\xi^2)\leq c\mathbf{E}(\xi),$ then for every $\epsilon>0$ and $0<\alpha\leq 1,$ there holds
$${\Pr}_\mathcal{Z} \Big\{\frac{\mu-\frac{1}{\mathsf{s}}\sum_{i=1}^N w_i \xi(z_i)}{\sqrt{\mu+\epsilon}}>\alpha \sqrt{\epsilon}\Big\}\leq \exp\Big\{-\frac{\alpha^2 \mathsf{s} \epsilon}{2c+\frac{2}{3}B}\Big\}.$$
\end{lemma}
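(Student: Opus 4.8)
The plan is to deduce this ratio-type bound from the one-sided weighted Bernstein inequality of Theorem~\ref{thm:three.avg}. First I would apply that theorem to the function $-\xi$ in place of $\xi$: its mean is $-\mu$, its variance equals $\sigma^2(\xi)=:\sigma^2$, and $|(-\xi)(\mathbf{z})-(-\mu)|=|\xi(\mathbf{z})-\mu|\le B$ almost everywhere, so taking the deviation level to be $t:=\alpha\sqrt{\epsilon}\,\sqrt{\mu+\epsilon}>0$ yields
$$\Pr\!\left(\mu-\frac{1}{\mathsf{s}}\sum_{i} w_i\xi(\mathbf{z}_i)\ \ge\ t\right)\ \le\ \exp\!\left(-\frac{\mathsf{s}\,t^2}{2\bigl(\sigma^2+\tfrac13 B t\bigr)}\right).$$
Since $\mu+\epsilon>0$, the event $\bigl\{(\mu-\tfrac{1}{\mathsf{s}}\sum_i w_i\xi(\mathbf{z}_i))/\sqrt{\mu+\epsilon}>\alpha\sqrt{\epsilon}\bigr\}$ is precisely $\bigl\{\mu-\tfrac{1}{\mathsf{s}}\sum_i w_i\xi(\mathbf{z}_i)>t\bigr\}$, so its probability is bounded by the right-hand side above.

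It then remains to check that this bound is at most $\exp(-\alpha^2\mathsf{s}\epsilon/(2c+\tfrac23 B))$, i.e. that $t^2/(\sigma^2+\tfrac13 Bt)\ge \alpha^2\epsilon/(c+\tfrac13 B)$. Substituting $t^2=\alpha^2\epsilon(\mu+\epsilon)$ and cross-multiplying, this is equivalent to $(\mu+\epsilon)(c+\tfrac13 B)\ge \sigma^2+\tfrac13 Bt$. Here I would use the variance hypothesis $\mathbf{E}(\xi^2)\le c\mathbf{E}(\xi)=c\mu$ in the form $\sigma^2=\mathbf{E}(\xi^2)-\mu^2\le c\mu\le c(\mu+\epsilon)$, together with the bound $t\le\mu+\epsilon$, which follows from $\alpha\le1$ and AM-GM: $\alpha\sqrt{\epsilon(\mu+\epsilon)}\le\tfrac12\bigl(\epsilon+(\mu+\epsilon)\bigr)=\tfrac{\mu}{2}+\epsilon\le\mu+\epsilon$ since $\mu\ge0$. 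Combining these two estimates (and $B\ge0$) makes $(\mu+\epsilon)(c+\tfrac13 B)\ge \sigma^2+\tfrac13 Bt$ immediate, which finishes the argument.

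The reduction itself is mechanical; the only real content — and the step I would be most careful about — is the chain of elementary inequalities certifying the exponent, in particular the choice of deviation level $t=\alpha\sqrt{\epsilon(\mu+\epsilon)}$ and then playing off $\sigma^2\le c\mu$ against $\tfrac13 Bt\le\tfrac13 B(\mu+\epsilon)$ so that the two contributions line up with the denominator $2c+\tfrac23 B$ of the target. One should also dispose of the degenerate case $\mu=0$ separately: then $\mathbf{E}(\xi^2)\le c\mu=0$ forces $\xi=0$ almost surely, the event in question has probability $0$, and the inequality holds trivially.
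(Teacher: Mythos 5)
Your proposal is correct and follows essentially the same route as the paper, which simply invokes the lower-deviation (second) form of Theorem~\ref{thm:three.avg} applied at the level $t=\alpha\sqrt{\epsilon(\mu+\epsilon)}$. You additionally write out the elementary exponent comparison (using $\sigma^2\le c\mu$ and $t\le\mu+\epsilon$) that the paper leaves implicit; this check is accurate.
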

\begin{proof}
The lemma follows directly from the second statement of Theorem \ref{thm:three.avg}.

\end{proof}
Lemma \ref{lem:ratiosingle} can also be extended to families of functions as follows.
\begin{lemma}\label{lem:ratiofamilyclass}
Let $\mathcal{G}$ be a set of functions on $\mathcal{Z}$ and $c>0$ such that, for each $g\in\mathcal{G},$ $\mathbf{E}(g)\ge 0$, 
$\mathbf{E}(g^2)\leq c\mathbf{E}(g)$ and $|g-\mathbf{E}(g)|\leq B$ almost everywhere. 
Then for every $\epsilon>0$ and $0<\alpha\le 1,$ we have
$${\Pr}_\mathcal{Z} \Big\{\sup_{g\in \mathcal{G}} \frac{\mathbf{E}(g)-\mathbf{E}_{{Z}_{\mathsf{s}}}(g)}{\sqrt{\mathbf{E}(g)+\epsilon}} > 4 \alpha \sqrt{\epsilon}\Big\}\leq \mathcal{N}(\mathcal{G},\alpha \epsilon)\exp\Big\{-\frac{\alpha^2 \mathsf{s} \epsilon}{2c+\frac{2}{3}B}\Big\}.$$
\end{lemma}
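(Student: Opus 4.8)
The plan is to run the standard covering-number reduction that converts the supremum over $\mathcal{G}$ into a finite union of single-function events, each controlled by Lemma~\ref{lem:ratiosingle}. First I would fix $\epsilon>0$ and $0<\alpha\le 1$, set $\ell=\mathcal{N}(\mathcal{G},\alpha\epsilon)$, and pick $g_1,\dots,g_\ell\in\mathcal{G}$ so that the disks $D_j$ of radius $\alpha\epsilon$ (in the sup-norm on $\mathcal{Z}$) centered at the $g_j$ cover $\mathcal{G}$. Each $g_j$ inherits the hypotheses $\mathbf{E}(g_j)\ge 0$, $\mathbf{E}(g_j^2)\le c\,\mathbf{E}(g_j)$ and $|g_j-\mathbf{E}(g_j)|\le B$, so Lemma~\ref{lem:ratiosingle} applied with $\xi=g_j$ gives $\Pr\{(\mathbf{E}(g_j)-\mathbf{E}_{Z_{\mathsf{s}}}(g_j))/\sqrt{\mathbf{E}(g_j)+\epsilon}>\alpha\sqrt{\epsilon}\}\le\exp(-\alpha^2\mathsf{s}\epsilon/(2c+\tfrac23 B))$ for each $j$.

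The core step is a deterministic claim: for every $g\in D_j$, if $(\mathbf{E}(g)-\mathbf{E}_{Z_{\mathsf{s}}}(g))/\sqrt{\mathbf{E}(g)+\epsilon}>4\alpha\sqrt{\epsilon}$ then $(\mathbf{E}(g_j)-\mathbf{E}_{Z_{\mathsf{s}}}(g_j))/\sqrt{\mathbf{E}(g_j)+\epsilon}>\alpha\sqrt{\epsilon}$. To see this I would use $\|g-g_j\|_\infty\le\alpha\epsilon$ to get $|\mathbf{E}(g)-\mathbf{E}(g_j)|\le\alpha\epsilon$ and, since $\sum_i w_i=\mathsf{s}$ makes $\mathbf{E}_{Z_{\mathsf{s}}}$ a genuine weighted average, $|\mathbf{E}_{Z_{\mathsf{s}}}(g)-\mathbf{E}_{Z_{\mathsf{s}}}(g_j)|\le\alpha\epsilon$ as well. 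The trick is to note $\sqrt{\mathbf{E}(g)+\epsilon}\ge\sqrt{\epsilon}$, so the hypothesis forces $\mathbf{E}(g)-\mathbf{E}_{Z_{\mathsf{s}}}(g)>4\alpha\epsilon$; hence the total perturbation $2\alpha\epsilon$ in the numerator is at most half of it, giving $\mathbf{E}(g_j)-\mathbf{E}_{Z_{\mathsf{s}}}(g_j)>\tfrac12(\mathbf{E}(g)-\mathbf{E}_{Z_{\mathsf{s}}}(g))$. For the denominator, $\mathbf{E}(g_j)+\epsilon\le\mathbf{E}(g)+\epsilon+\alpha\epsilon\le 2(\mathbf{E}(g)+\epsilon)$ yields $\sqrt{\mathbf{E}(g_j)+\epsilon}\le\sqrt2\,\sqrt{\mathbf{E}(g)+\epsilon}$. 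Combining, the ratio at $g_j$ exceeds $\tfrac{1}{2\sqrt2}$ times the ratio at $g$, i.e. more than $\tfrac{4}{2\sqrt2}\alpha\sqrt{\epsilon}=\sqrt2\,\alpha\sqrt{\epsilon}>\alpha\sqrt{\epsilon}$.

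Finally, since any $g$ witnessing $(\mathbf{E}(g)-\mathbf{E}_{Z_{\mathsf{s}}}(g))/\sqrt{\mathbf{E}(g)+\epsilon}>4\alpha\sqrt{\epsilon}$ lies in some $D_j$, the event $\{\sup_{g\in\mathcal{G}}(\mathbf{E}(g)-\mathbf{E}_{Z_{\mathsf{s}}}(g))/\sqrt{\mathbf{E}(g)+\epsilon}>4\alpha\sqrt{\epsilon}\}$ is contained in $\bigcup_{j\le\ell}\{(\mathbf{E}(g_j)-\mathbf{E}_{Z_{\mathsf{s}}}(g_j))/\sqrt{\mathbf{E}(g_j)+\epsilon}>\alpha\sqrt{\epsilon}\}$, and a union bound over the $\ell=\mathcal{N}(\mathcal{G},\alpha\epsilon)$ centers gives the claimed inequality.

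The hard part is the deterministic ratio-perturbation step: making the constants line up so that a factor $4$ on the left-hand side is exactly enough room to absorb the two $\alpha\epsilon$ covering errors in the numerator and the $\sqrt2$ loss in the denominator. Everything else — existence of the finite cover, the union bound, and the appeal to Lemma~\ref{lem:ratiosingle} — is routine, and the only place the network structure enters is through $\sum_i w_i=\mathsf{s}$, which is what makes $\mathbf{E}_{Z_{\mathsf{s}}}$ a weighted average and hence $1$-Lipschitz in $\|\cdot\|_\infty$.
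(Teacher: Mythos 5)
Your proposal is correct and follows exactly the route the paper intends: it only sketches this proof by pointing to the argument of Lemma~\ref{lem:familyclass1} and to Chapter~3 of \cite{cucker07}, which is precisely the covering-number reduction, the application of Lemma~\ref{lem:ratiosingle} to each center, and the union bound that you carry out. Your deterministic ratio-perturbation step (using $\mathbf{E}(g)\ge 0$ to get the $4\alpha\epsilon$ lower bound on the numerator, the $2\alpha\epsilon$ sup-norm perturbation from $\sum_i w_i=\mathsf{s}$, and the $\sqrt{2}$ loss in the denominator) checks out and supplies the details the paper omits.
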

\begin{proof}
The proof of this lemma is similar to that of Lemma \ref{lem:familyclass1}.
Details for the proof can also be found in Chapter 3 of \cite{cucker07}.
\end{proof}

Denote $\mathcal{L}_\rho^2(\mathcal{X})$ as a Banach space with the norm
$\|f\|_{{\mathcal{L}}_\rho^2({\mathcal X})}=\Big(\int_\mathcal{X}|f(x)^2|d\rho_{\bf x}\Big)^\frac{1}{2}.$

\begin{lemma}\label{lem:convex}{\rm \cite{cucker07}}
Let $\mathcal{H}$ be a convex subset of $\mathcal{C}(\mathcal{X})$ such that $f_\mathcal{H}$ exists. 
Then $f_\mathcal{H}$ is unique as an element in $\mathcal{L}_\rho^2(\mathcal{X})$ and for all 
$f\in \mathcal{H},$ $$\int_{\cal X}(f_\mathcal{H}(x)-f(x))^2 \rho(x) dx \leq \mathcal{E}_\mathcal{H}(f).$$
In particular, if $\rho(x)$ is not degenerate then $f_\mathcal{H}$ is unique in $\mathcal{H}.$
\end{lemma}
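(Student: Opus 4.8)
The plan is to reduce the statement to the elementary geometry of projection onto a convex set in the Hilbert space $\mathcal{L}_\rho^2(\mathcal{X})$. First I would record the Pythagorean identity $\mathcal{E}(f)=\mathcal{E}(f_\rho)+\|f-f_\rho\|_{\mathcal{L}_\rho^2(\mathcal{X})}^2$ valid for every $f\in\mathcal{C}(\mathcal{X})$. This comes from writing $(f(x)-y)^2=(f(x)-f_\rho(x))^2+2(f(x)-f_\rho(x))(f_\rho(x)-y)+(f_\rho(x)-y)^2$ and integrating against $\rho$: for fixed $x$ the inner integral of the cross term is $\int_{\cal Y}(f_\rho(x)-y)\rho_{y|\mathbf{x}}(x,y)\,dy=f_\rho(x)-f_\rho(x)=0$ by the definition $f_\rho(x)=\int y\,\rho_{y|\mathbf{x}}(x,y)\,dy$, so the cross term vanishes. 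Subtracting this identity for $f$ and for $f_\mathcal{H}$ gives $\mathcal{E}_\mathcal{H}(f)=\|f-f_\rho\|^2-\|f_\mathcal{H}-f_\rho\|^2$ in $\mathcal{L}_\rho^2(\mathcal{X})$; in other words $f_\mathcal{H}$ is exactly the $\mathcal{L}_\rho^2$-nearest point of $\mathcal{H}$ to $f_\rho$.

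Next I would use convexity of $\mathcal{H}$ to extract the variational (obtuse-angle) inequality. For any $f\in\mathcal{H}$ and $t\in(0,1]$ the point $f_\mathcal{H}+t(f-f_\mathcal{H})$ lies in $\mathcal{H}$, so minimality of $f_\mathcal{H}$ gives $\|f_\rho-f_\mathcal{H}-t(f-f_\mathcal{H})\|^2\ge\|f_\rho-f_\mathcal{H}\|^2$; expanding, dividing by $t$ and letting $t\to0^+$ yields $\langle f_\rho-f_\mathcal{H},\,f-f_\mathcal{H}\rangle_{\mathcal{L}_\rho^2(\mathcal{X})}\le 0$. Now expand $\|f-f_\rho\|^2=\|f-f_\mathcal{H}\|^2+2\langle f-f_\mathcal{H},\,f_\mathcal{H}-f_\rho\rangle+\|f_\mathcal{H}-f_\rho\|^2$ and substitute into $\mathcal{E}_\mathcal{H}(f)=\|f-f_\rho\|^2-\|f_\mathcal{H}-f_\rho\|^2$; the inner-product term equals $-2\langle f-f_\mathcal{H},\,f_\rho-f_\mathcal{H}\rangle\ge 0$, so $\mathcal{E}_\mathcal{H}(f)\ge\|f-f_\mathcal{H}\|_{\mathcal{L}_\rho^2(\mathcal{X})}^2=\int_{\cal X}(f_\mathcal{H}(x)-f(x))^2\rho(x)\,dx$, which is the claimed bound.

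Finally, uniqueness. If $\tilde f_\mathcal{H}\in\mathcal{H}$ is another minimizer of $\mathcal{E}$, apply the inequality just proved with $f=\tilde f_\mathcal{H}$: since $\mathcal{E}_\mathcal{H}(\tilde f_\mathcal{H})=\mathcal{E}(\tilde f_\mathcal{H})-\mathcal{E}(f_\mathcal{H})=0$ we get $\|f_\mathcal{H}-\tilde f_\mathcal{H}\|_{\mathcal{L}_\rho^2(\mathcal{X})}=0$, i.e. $f_\mathcal{H}$ is unique as an element of $\mathcal{L}_\rho^2(\mathcal{X})$. When the marginal $\rho$ on $\mathcal{X}$ is not degenerate, a continuous function vanishing $\rho$-almost everywhere vanishes identically, so $f_\mathcal{H}$ is then the unique minimizer in $\mathcal{H}$ itself.

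I expect the only real subtlety to be the bookkeeping between the $\mathcal{L}_\rho^2(\mathcal{X})$ picture (where the convex-projection geometry lives) and the pointwise/continuous picture: one must check that $\mathcal{H}\subset\mathcal{C}(\mathcal{X})$ maps to a convex subset of $\mathcal{L}_\rho^2(\mathcal{X})$ and that nondegeneracy of $\rho$ is precisely what upgrades $\mathcal{L}_\rho^2$-equality to pointwise equality; everything else is routine Hilbert-space manipulation. This argument is exactly the one in Chapter~3 of \cite{cucker07}, which we follow.
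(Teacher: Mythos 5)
Your proof is correct and is precisely the argument behind this lemma in the cited source \cite{cucker07} (the paper itself states the result without proof): the Pythagorean identity $\mathcal{E}(f)-\mathcal{E}(f_\rho)=\|f-f_\rho\|_{\mathcal{L}_\rho^2(\mathcal{X})}^2$ turns $f_\mathcal{H}$ into the metric projection of $f_\rho$ onto the convex set $\mathcal{H}$, and the obtuse-angle variational inequality gives both the bound and the $\mathcal{L}_\rho^2$-uniqueness. The only implicit hypothesis worth noting is that $f_\rho\in\mathcal{L}_\rho^2(\mathcal{X})$, which holds under the boundedness assumptions in force throughout the paper.
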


\fi

The following is our main result.
\begin{theorem} 
Let $\mathcal{H}$ be a compact and convex subset of $\mathcal{C}(\mathcal{X}).$ 
If $\mathcal{H}$ is a M-bounded, 
then for all $\epsilon>0$,
$$\Pr\big(\mathcal{E}_\mathcal{H}(f_{{\bf Z}_{\mathsf{s}}})\ge \epsilon\big)\le \mathcal{N}\Big(\mathcal{H},\frac{\epsilon}{12M}\Big)  
\exp\Big(-\frac{\mathsf{s}\epsilon}{300M^4}\Big).$$
\end{theorem}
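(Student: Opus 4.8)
The plan is to reproduce the one-stage analysis of empirical risk minimization for the least square loss (Chapter 3 of \cite{cucker07}), replacing the sample size $m$ throughout by the $\mathsf{s}$-value and using the weighted ratio inequality Lemma \ref{lem:ratiofamilyclass} in place of its i.i.d. counterpart. The role of the convexity of $\mathcal{H}$ is, as usual, to supply a variance-to-mean inequality through Lemma \ref{lem:convex}.

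First I would introduce, for each $f \in \mathcal{H}$, the function $g_f(\mathbf{x},y) = (f(\mathbf{x})-y)^2 - (f_\mathcal{H}(\mathbf{x})-y)^2$ on $\mathcal{Z}$. Then $\mathbf{E}(g_f) = \mathcal{E}(f) - \mathcal{E}(f_\mathcal{H}) = \mathcal{E}_\mathcal{H}(f) \ge 0$ and $\frac{1}{\mathsf{s}}\sum_i w_i g_f(\mathbf{z}_i) = \mathcal{E}_{\mathsf{s}}(f) - \mathcal{E}_{\mathsf{s}}(f_\mathcal{H})$. Writing $g_f = (f - f_\mathcal{H})\cdot(f + f_\mathcal{H} - 2y)$ and using that $\mathcal{H}$ is $M$-bounded gives $|g_f - \mathbf{E}(g_f)| \le B$ with $B$ of order $M^2$ and $\mathbf{E}(g_f^2) \le 4M^2\,\|f - f_\mathcal{H}\|^2_{\mathcal{L}_\rho^2(\mathcal{X})}$; since $\mathcal{H}$ is convex, Lemma \ref{lem:convex} gives $\|f - f_\mathcal{H}\|^2_{\mathcal{L}_\rho^2(\mathcal{X})} \le \mathcal{E}_\mathcal{H}(f) = \mathbf{E}(g_f)$, so $\mathbf{E}(g_f^2) \le c\,\mathbf{E}(g_f)$ with $c$ of order $M^2$. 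Hence the class $\mathcal{G} = \{g_f : f \in \mathcal{H}\}$ satisfies the hypotheses of Lemma \ref{lem:ratiofamilyclass}.

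Next I would apply Lemma \ref{lem:ratiofamilyclass} to $\mathcal{G}$ with a parameter $\alpha$ to be fixed later: with probability at least $1 - \mathcal{N}(\mathcal{G}, \alpha\epsilon)\exp(-\alpha^2\mathsf{s}\epsilon/(2c + \frac{2}{3}B))$, every $f \in \mathcal{H}$ satisfies $\mathbf{E}(g_f) - \frac{1}{\mathsf{s}}\sum_i w_i g_f(\mathbf{z}_i) \le 4\alpha\sqrt{\epsilon}\,\sqrt{\mathbf{E}(g_f) + \epsilon}$. Applying this at $f = f_{\mathbf{Z}_{\mathsf{s}}}$ and using $\frac{1}{\mathsf{s}}\sum_i w_i g_{f_{\mathbf{Z}_{\mathsf{s}}}}(\mathbf{z}_i) = \mathcal{E}_{\mathsf{s}}(f_{\mathbf{Z}_{\mathsf{s}}}) - \mathcal{E}_{\mathsf{s}}(f_\mathcal{H}) \le 0$, which holds because $f_{\mathbf{Z}_{\mathsf{s}}}$ minimizes $\mathcal{E}_{\mathsf{s}}$ over $\mathcal{H}$, I obtain $t \le 4\alpha\sqrt{\epsilon}\,\sqrt{t + \epsilon}$ for $t := \mathcal{E}_\mathcal{H}(f_{\mathbf{Z}_{\mathsf{s}}})$. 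Solving this quadratic inequality bounds $t$ by a constant multiple of $\epsilon$, so after rescaling $\epsilon$ and converting the covering number of $\mathcal{G}$ into one of $\mathcal{H}$ through $|g_f - g_{f'}| \le 4M\|f - f'\|_\infty$ (so that a $(\delta/(4M))$-net of $\mathcal{H}$ yields a $\delta$-net of $\mathcal{G}$), one reaches a bound of the stated form.

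The main obstacle is entirely bookkeeping: one must pin down the constant $c$ in $\mathbf{E}(g_f^2) \le c\,\mathbf{E}(g_f)$ and then choose $\alpha$ so that the quadratic-inequality step and the covering-number conversion land simultaneously on the advertised numbers, i.e. so that the net radius becomes $\epsilon/(12M)$ and the exponent becomes $\mathsf{s}\epsilon/(300M^4)$. Once the weighted inequality Theorem \ref{thm:three.avg} and its corollary Lemma \ref{lem:ratiofamilyclass} are available, this computation is carried out exactly as in the i.i.d. case in \cite{cucker07}, with $m$ replaced by $\mathsf{s}$; no idea beyond those inequalities and the convexity of $\mathcal{H}$ is required.
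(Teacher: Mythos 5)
Your proposal is correct and follows essentially the same route the paper intends: the paper itself gives no explicit proof of this theorem, but the lemmas it assembles (in particular Lemma~\ref{lem:ratiofamilyclass} applied to the class $\{g_f=(f(\mathbf{x})-y)^2-(f_{\mathcal H}(\mathbf{x})-y)^2\}$, with the variance--mean condition supplied by Lemma~\ref{lem:convex} and the covering-number transfer via $|g_f-g_{f'}|\le 4M\|f-f'\|_\infty$) are exactly the ingredients of the one-stage argument you describe, carried out as in Chapter~3 of \cite{cucker07} with $m$ replaced by $\mathsf{s}$. The constants $\epsilon/(12M)$ and $300M^4$ are inherited verbatim from that reference, so the bookkeeping you defer is precisely the computation done there.
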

\noindent{\bf Remark:}
In this paper, we mainly consider the ERM algorithm associated with networked samples to avoid over-fitting.
Another way to deal with over-fitting is regularization, which is initially proposed to solve ill-posed phenomena induced in inverse problems, 
e.g., ill-conditioned matrix inversion problems.
Similar results can also be obtained for the regularization algorithms by using the probability inequalities in section \ref{subsec:inequalities}.

\section{Conclusions}\label{sec:conclusion}
In this paper, we introduce the problem of learning from networked data. 
We first show that this may result in a poor sample error bound if we ignore the dependency relationship between the examples.
We then analyze a method where first a set of i.i.d. examples is selected.
Existing theoretical results can be directly used for this method, but it is difficult to find a large set of independent examples.
We propose a novel method which is a weighting strategy with efficiently computable weights.
To assess learning algorithms on these weighted examples, we show a Bernstein-type statistical inequality.
Using this inequalitiy, we can estimate the sample error.  
We show that this bound is better than existing alternatives.

In future, we want to consider settings where we do not make the strong independence assumption that the occurrences
of the hyperedges are independent of the features of the vertices.
A first step in this direction would be to develop a measure to assess the strength of the dependency of the hyperedges on the features of the vertices
and its influence on the learning task at hand.

\subsubsection*{Acknowledgements}
The first author and the second author are supported by ERC Starting Grant 240186
“MiGraNT: Mining Graphs and Networks: a Theory-based approach”.
The third author is supported by the EPSRC under grant EP/J001384/1.

\bibliography{cap2013}

\end{document}